\documentclass[preprint]{imsart}

\usepackage{amsbsy}
\usepackage{amsmath}
\usepackage{amsthm}

\usepackage{color}
\usepackage[all]{xy}

\usepackage{wrapfig}

\usepackage{epsf}
\usepackage{epsfig}

\usepackage{enumitem}

\usepackage{bbm}

\newtheorem{theorem}{Theorem}

\newtheorem{lemma}{Lemma}

\usepackage[vlined,linesnumberedhidden,ruled,resetcount]{algorithm2e}

\SetKwBlock{Repeat}{repeat}{}
\SetKwInOut{Initialization}{Initialization}

\newcommand{\bfmX}{\boldsymbol{X}}
\newcommand{\bfmW}{\boldsymbol{W}}
\newcommand{\bfmZ}{\boldsymbol{Z}}
\newcommand{\bfmA}{\boldsymbol{A}}
\newcommand{\bfmB}{\boldsymbol{B}}
\newcommand{\ci}{\perp\!\!\!\perp}
\newcommand{\nci}{\not\perp\!\!\!\perp}

\newcommand{\bfGamma}{\boldsymbol{\Gamma}}
\newcommand{\bfI}{{\bf I}}
\newcommand{\bfOmega}{\boldsymbol{\Omega}}

\newcommand{\bfSigma}{\boldsymbol{\Sigma}}
\newcommand{\bfBeta}{\boldsymbol{\beta}}

\newcommand{\bfA}{{\bf A}}
\newcommand{\bfU}{{\bf U}}
\newcommand{\bfD}{{\bf D}}
\newcommand{\bfV}{{\bf V}}
\newcommand{\bfX}{{\bf X}}
\newcommand{\bfY}{{\bf Y}}
\newcommand{\bfW}{{\bf W}}

\newcommand{\bfmU}{\boldsymbol{U}}

\newcommand{\beginsupplement}{%
        \setcounter{table}{0}
        \renewcommand{\thetable}{S\arabic{table}}%
        \setcounter{figure}{0}
        \renewcommand{\thefigure}{S\arabic{figure}}%
     }


%

\begin{document}

\begin{frontmatter}

\title{Causality-aware counterfactual confounding adjustment as an alternative to linear residualization in anticausal prediction tasks based on linear learners}

\runtitle{Causality-aware confounding adjustment versus linear residualization}

\author{Elias Chaibub Neto \\ Sage Bionetworks, Seattle, WA 98121}

\runauthor{Chaibub Neto E.}

\begin{abstract}
Linear residualization is a common practice for confounding adjustment in machine learning (ML) applications. Recently, causality-aware predictive modeling has been proposed as an alternative causality-inspired approach for adjusting for confounders. The basic idea is to simulate counterfactual data that is free from the spurious associations generated by the observed confounders. In this paper, we compare the linear residualization approach against the causality-aware confounding adjustment in anticausal prediction tasks, and show that the causality-aware approach tends to (asymptotically) outperform the residualization adjustment in terms of predictive performance in linear learners. Importantly, our results still holds even when the true model is not linear. We illustrate our results in both regression and classification tasks, where we compared the causality-aware and residualization approaches using mean squared errors and classification accuracy in synthetic data experiments where the linear regression model is mispecified, as well as, when the linear model is correctly specified. Furthermore, we illustrate how the causality-aware approach is more stable than residualization with respect to dataset shifts in the joint distribution of the confounders and outcome variables.
\end{abstract}

\end{frontmatter}

\section{Introduction}

Confounding is a ubiquitous problem in machine learning (ML). While the precise definition of confounding varies across the many applied fields that are plagued by this critical issue~\cite{morabia2011}\cite{pearl2018}, here, we subscribe to the causality-based definition of confounding. Following~\cite{pearl2009}, we adopt a graphical definition where a variable $A$ is a confounder of the relationship between variables $X$ and $Y$, if there is an active path from $A$ to $X$ that does not go through $Y$, and an active path from $A$ to $Y$ that does not go through $X$, in the causal graph describing how these variables are related.

Linear residualization is a common technique for confounding adjustment in applied ML work. The basic idea is to regress the input data on the observed confounders and use the residuals of the regression fits as the new inputs for ML algorithms. As pointed in~\cite{snoek2019}, in neuroimage studies, linear residualization (also described as ``confounding regression", ``image correction", or simply as ``regressing out" confounding effects) is a widely used and perhaps the most common approach employed in practice for addressing confounding~\cite{abdulkadir2014,dubois2017,dukart2011,kostro2014,rao2017,tood2013,greenstein2012,doan2017,friston1994,maglanoc2020}.

When training predictive models from neuroimage data, researchers typically have access to other demographic variables such as age and gender. Such variables often affect the imaging data, as well as, the outputs the researchers are trying to predict, and represent confounders of the prediction. In this paper, we compare the linear residualization approach against the recently proposed causality-aware confounding adjustment~\cite{achaibubneto2020a} in the particular context of in anticausal prediction tasks (where the output has a causal influence on the inputs). (Note that some of the applications described above represent anticausal prediction tasks. For instance, in neuroimage diagnostic applications the images capture symptoms of the disease, as described in more detail in Supplementary Section 7). The basic idea behind the causality-aware approach is to simulate counterfactual data that is free from the spurious associations generated by the observed confounders. In anticausal predictions tasks, the approach is implemented by regressing each input on both the confounders and output, and then generating counterfactual inputs by adding back the estimated residuals to a linear predictor that no longer includes the confounder variables. The new counterfactual inputs are then used as the inputs for the ML algorithm.

In this paper, we prove that, for anticausal prediction tasks, the strength of the covariance between the causality-aware counterfactual features and the output variable is always asymptotically stronger than the covariance between the residualized features and the output. Since this result holds for all features/inputs, we conjecture that the causality-aware approach asymptotically outperforms the linear residualization adjustment in terms of predictive performance in linear ML models (and we prove this result for the mean squared error metric in some particular cases). Importantly, our covariance strength result still holds even when the true data generating process is not linear, so that the linear models used to adjust for confounding are mispecified.

We illustrate our analytical results using both regression and classification tasks using both correct and mispecified models. For the regression task, we compared the causality-aware and residualization approaches using mean squared errors, while the classification task performance is compared using classification accuracy.

Finally, while our analytical results assume the absence of dataset shifts between the training and test sets, we illustrate how the causality-aware approach is more stable than residualization under dataset shifts of the joint distribution of the confounders and outcome variables. Our results show that linear residualization can be safely replaced by the causality-aware approach in ML applications based on linear learners.

\section{Background}

\subsection{Notation and causality definitions}

Throughout the text, we let $X$, $Y$, and $A$ represent, respectively, the input, output and confounder variables. Sets of random variables are represented in italic and boldface, and we use the superscripts $tr$ and $ts$ to represent the training and test sets, respectively. We adopt Pearl's mechanism-based approach to causation~\cite{pearl2009} where the joint distribution of a set of variables is accompanied by a \textit{directed acyclic graph} (DAG), also denoted as a \textit{causal diagram/graph}, representing our prior knowledge (or assumptions) about the causal relation between the variables. The \textit{nodes} on the causal graph represent the random variables, and the \textit{directed edges} represent causal influences of one variable on another. Throughout the text we assume that the variables $X$, $Y$, and $A$ have been standardized to have mean 0 and variance 1\footnote{Note that any linear model $V^o_s = \mu_s + \Sigma_{j\not=s} \beta_{sj} V^o_j + W^o_s$, where $V^o_s$ represents the original data, can be reparameterized into its equivalent standardized form $V_s = \sum_{j\not=s} \gamma_{sj} V_j + W_s$, where $V_s = (V^o_s - E(V^o_s))/Var(V^o_s)^{\frac{1}{2}}$ represent standardized variables with $E(V_s) = 0$ and $Var(V_s) = 1$; $\gamma_{_{{V_s}{V_j}}} = \beta_{{V_s}{V_j}} (Var(V^o_j)/Var(V^o_s))^{\frac{1}{2}}$ represent the path coefficients~\cite{wright1934}; and $W_s = W^o_s/Var(V^o_s)^{\frac{1}{2}}$ represent the standardized error terms.} (with the exception of the stability analyses).

\subsection{The confounded anticausal prediction task}

\begin{wrapfigure}{r}{0.37\textwidth}
\vskip -0.1in
{\scriptsize
$$
\xymatrix@-2.2pc{
& W_{A_1} \ar[drr] \ar@/^1pc/@{<->}[rr] & \ldots & W_{A_k} \ar[drr] &&& \\
W_{X_1} \ar[ddr] \ar@/_1pc/@{<->}[dd] & & & *+[F-:<10pt>]{A_1} \ar[ddll] \ar[ddddll] \ar[dddrrr] & \ldots & *+[F-:<10pt>]{A_k} \ar[ddllll] \ar[ddddllll] \ar[dddr] \\
\vdots &&&&&&  \\
W_{X_p} \ar[ddr] & *+[F-:<10pt>]{X_1} &&&& \\
& \vdots & & & & & *+[F-:<10pt>]{Y} \ar[ulllll] \ar[dlllll] \\
& *+[F-:<10pt>]{X_p} &&& W_Y \ar[urr] && \\
}
$$}
\vskip -0.1in
  \caption{Confounded anticausal prediction task.}
  \label{fig:confounded.anticausal.example}
  \vskip -0.1in
\end{wrapfigure}
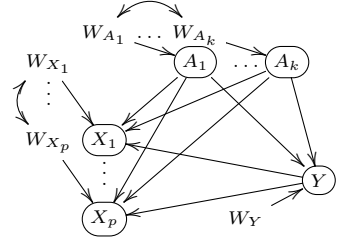
A prediction task where the output variable has a causal influence on the input variables is denoted an \textit{anticausal prediction task}~\cite{scholkopf2012}. Furthermore, if there are variables $\bfmA$ such that for each element $A_j$ of $\bfmA$ there is an active path from $A_j$ to $Y$ that does not go through any element of $\bfmX$ and, similarly, there are active paths from $A_j$ to elements of $\bfmX$ that do not go through $Y$, then we have a \textit{confounded anticausal prediction task}, as illustrated in Figure \ref{fig:confounded.anticausal.example}. Note that the variables $\{W_{X_1}, \ldots, W_{X_p}\}$ and $\{W_{A_1}, \ldots, W_{A_k}\}$ represent sets of correlated error terms, and that the causal model in Figure \ref{fig:confounded.anticausal.example} might represent a reparameterization of a model with uncorrelated error terms and unknown causal relations among the $\bfmX$ inputs, as well as, among the $\bfmA$ confounders.

This point has been described in detail in~\cite{achaibubneto2020a}, where it has been shown that, in the special case where the true data generation process corresponds to linear structural causal models, we can always reparameterize the original model in a way where the covariance structure among the input variables, as well as, the covariance structure among the confounder variables is pushed towards the respective error terms as illustrated in Figure \ref{fig:confounded.anticausal.example}. However, it is important to clarify, that even when the true data generation process does not correspond to a set of linear structural equations, we can still model the data according to the diagram in Figure \ref{fig:confounded.anticausal.example}, with the understanding that we are working with a mispecified model. In either way, we model the input variables, $X_j$, $j = 1, \ldots, p$, according to the linear structural equations,
\begin{equation}
X_j = \sum_{i=1}^k \gamma_{_{{X_j}{A_i}}} A_i + \gamma_{_{{X_j}{Y}}} Y +  W_{X_j}
\end{equation}
which can be represented in matrix form by,
\begin{equation}
\underbrace{
\begin{pmatrix}
X_1 \\
\vdots \\
X_p \\
\end{pmatrix}}_{\bfmX}
=
\underbrace{
\begin{pmatrix}
\gamma_{_{{X_1}{A_1}}} & \ldots & \gamma_{_{{X_1}{A_k}}} \\
\vdots & \ddots & \vdots \\
\gamma_{_{{X_p}{A_1}}} & \ldots & \gamma_{_{{X_p}{A_k}}} \\
\end{pmatrix}}_{\bfGamma_{XA}}
\underbrace{
\begin{pmatrix}
A_1 \\
\vdots \\
A_k \\
\end{pmatrix}}_{\bfmA}
+
\underbrace{
\begin{pmatrix}
\gamma_{_{{X_1}{Y}}} \\
\vdots \\
\gamma_{_{{X_p}{Y}}} \\
\end{pmatrix}}_{\bfGamma_{XY}} \, Y
+
\underbrace{
\begin{pmatrix}
W_{X_1} \\
\vdots \\
W_{X_p} \\
\end{pmatrix}}_{\bfmW_X}~.
\end{equation}

Similarly, we model the output variable, $Y$, as,
\begin{equation}
Y = \sum_{j=1}^k \gamma_{_{{Y}{A_j}}} A_j +  W_Y =
\underbrace{
\begin{pmatrix}
\gamma_{_{{Y}{A_1}}} & \ldots & \gamma_{_{{Y}{A_k}}} \\
\end{pmatrix}}_{\bfGamma_{YA}}
\underbrace{
\begin{pmatrix}
A_1 \\
\vdots \\
A_k \\
\end{pmatrix}}_{\bfmA}
+ \, W_Y~,
\end{equation}
so that our inferences will be based on the potentially mispecified models,
\begin{align}
\bfmX &= \bfGamma_{XA} \, \bfmA  + \bfGamma_{XY} \, Y + \bfmW_X~, \label{eq:X.model} \\
Y &= \bfGamma_{YA} \, \bfmA + W_Y~, \label{eq:Y.model}
\end{align}
where the variables $\bfmX$, $\bfmA$, and $Y$ are scaled to have mean 0 and variance 1, and the error terms have mean 0 and finite variance (but are not assumed to be Gaussian.)

\subsection{A note on non-representative development data and target population dataset shifts}

In this paper, we assume that a modeler/researcher has access to independent and identically distributed (i.i.d.) training and test sets derived from a population that is not representative of the target populations where the ML algorithm will be deployed. This scenario represents the all too common situation of a researcher interested in developing ML models, but with access to a single (and likely non-representative) development dataset, which the researcher splits into i.i.d. training and test sets in order to train and evaluate the learners.

While in this setting researchers sometimes observe that confounding adjustment decreases the predictive performance (e.g., when the confounder increases the association between the inputs and the output variable), researchers are still usually willing to perform confounding adjustments in order to obtain more stable predictions that will not degrade (or, at least, degrade to a lesser degree) when applied to distinct target populations that are shifted relative to the non-representative development population.

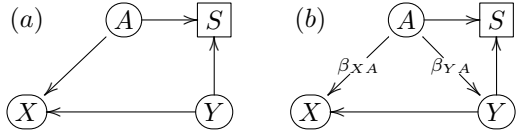
\begin{wrapfigure}{r}{0.6\textwidth}
\vskip -0.1in
$$
\xymatrix@-0.3pc{
(a) & *+[F-:<10pt>]{A} \ar[dl] \ar[r] & *+[F]{S} & (b) & *+[F-:<10pt>]{A} \ar[dl]|-{\beta_{XA}} \ar[dr]|-{\beta_{YA}} \ar[r] & *+[F]{S} \\
*+[F-:<10pt>]{X}  & & *+[F-:<10pt>]{Y} \ar[ll] \ar[u] & *+[F-:<10pt>]{X}  & & *+[F-:<10pt>]{Y} \ar[ll] \ar[u] \\
}
$$
\vskip -0.15in
  \caption{Confounding as a consequence of selection biases.}
  \label{fig:selection.mechanisms}
\end{wrapfigure}
Note that in the context of anticausal prediction tasks, the non-representativeness of the development data often arises due to selection mechanisms operating during the data collection phase. As illustrated in Figure \ref{fig:selection.mechanisms}a, confounding can be generated by selection mechanisms alone\footnote{Here, $S$ represents a binary variable which indicates whether the sample was included or not in the dataset, and the square frame around $S$ indicates that our dataset is generated conditional on $S$ being set to 1. Note that conditional on $S = 1$, we have that the path $X \leftarrow A \rightarrow S \leftarrow Y$ is open, since $S$ is a collider. This shows that $A$ satisfies the definition of a confounder.}. Furthermore, even when the confounder has stable causal effects on $X$ and on $Y$ (as represented by $\beta_{XA}$ and $\beta_{YA}$ in Figure \ref{fig:selection.mechanisms}b), selection mechanisms can still contribute to the association between $A$ and $Y$, making the data non-representative relative to target populations where this association is shifted. In general, selection mechanisms represent a common source of dataset shift in the joint distribution of the confounders and outcome variables, $P(A, Y)$, in anticausal prediction tasks.

\subsection{Linear residualization adjustment}

The linear residualization approach is implemented by regressing each separate input variable $X_j$ on the confounders, and then using the residuals of the linear regression fits as the new inputs for machine learning. Since the output variable is not included as a covariate in the regression fits, we have that the approach is actually based on the reduced model obtained by replacing eq. (\ref{eq:Y.model}) on eq. (\ref{eq:X.model}),
\begin{align}
\bfmX &= \bfGamma_{XA} \, \bfmA  + \bfGamma_{XY} \, Y + \bfmW_X \nonumber \\
&= \bfGamma_{XA} \, \bfmA + \bfGamma_{XY} \, (\bfGamma_{YA} \, \bfmA + W_Y) + \bfmW_X \nonumber \\
&= \bfOmega_{XA} \, \bfmA + \bfmW_X^\star
\end{align}
where $\bfOmega_{XA} = \bfGamma_{XA} + \bfGamma_{XY} \, \bfGamma_{YA}$, and $\bfmW_X^\star = \bfmW_X + \bfGamma_{XY} \, W_Y$. In practice, the residualized inputs, $\hat{\bfmX}_r$, are estimated as,
\begin{equation}
\hat{\bfmX}_r = \bfmX - \hat{\bfOmega}_{XA} \, \bfmA~,
\end{equation}
by regressing the catenated train and test set inputs on the catenated train and test confounder data in order to estimate $\bfOmega_{XA}$. Note that $\hat{\bfmX}_r$ corresponds to the estimated residual error term $\hat{\bfmW}_X^\star$.

\subsection{Causality-aware counterfactual adjustment}

Causality-aware counterfactual confounding adjustment is a special case of the causality-aware predictive modeling framework proposed by~\cite{achaibubneto2020a}. In the context of anticausal prediction tasks plagued by confounders, the key idea is to train and evaluate supervised ML algorithms on counterfactually simulated data which retains only the associations generated by the causal influences of the output variable on the inputs. The approach is implemented using a modification of Pearl's three step approach for the computation of deterministic counterfactuals~\cite{pearl2009,pearl2016}, where we regress the inputs on the confounders and output variable in order to estimate the model residuals and regression coefficients, and then simulate counterfactual data by adding back the model residuals to a linear predictor that no longer contains the confounder variables. Mechanistically, the causality-aware inputs are calculated as follows:
\begin{enumerate}[leftmargin=*]
\item Using the training set, estimate regression coefficients and residuals from the linear regression model,
$\bfmX^{tr} = \bfGamma_{XA}^{tr} \, \bfmA^{tr}  + \bfGamma_{XY}^{tr} \, Y^{tr} + \bfmW_X^{tr}$,
and then compute the respective counterfactual inputs, $\bfmX_c^{tr}$, by adding back the estimated residuals, $\hat{\bfmW}_X^{tr} = \bfmX^{tr} - \hat{\bfGamma}_{XA}^{tr} \, \bfmA^{tr} - \hat{\bfGamma}_{XY}^{tr} \, Y^{tr}$, to the quantity $\hat{\bfGamma}_{XY}^{tr} \, Y^{tr}$ (which represents the linear predictor obtained by excluding the confounder variables). That is, estimate the counterfactual features as,
\begin{equation}
\hat{\bfmX}_c^{tr} = \hat{\bfGamma}_{XY}^{tr} \, Y^{tr} + \hat{\bfmW}_X^{tr}~.
\label{eq:training.set.inputs.ca}
\end{equation}
\item Using the test set, compute the counterfactual inputs,
\begin{equation}
\hat{\bfmX}_c^{ts} = \bfmX^{ts} - \hat{\bfGamma}_{XA}^{tr} \, \bfmA^{ts}~,
\label{eq:test.set.inputs.ca}
\end{equation}
using the regression coefficients $\hat{\bfGamma}_{XA}^{tr}$ estimated in the training set.
\end{enumerate}
Once the training and test set counterfactual inputs, $\hat{\bfmX}_c^{tr}$ and $\hat{\bfmX}_c^{ts}$, have been generated we can then use $\hat{\bfmX}_c^{tr}$ and $Y^{tr}$ to train a linear learner, and then use $\hat{\bfmX}^{ts}_c$ to generate predictions that are free from the influence, or at least impacted by a lesser degree, by the observed confounders. Observe that the calculation of the test set causality-aware inputs in eq. (\ref{eq:test.set.inputs.ca}) does not uses the test set output, $Y^{ts}$. Observe, as well, that for large sample sizes, we have that the computation of the test set inputs using eq. (\ref{eq:test.set.inputs.ca}) is equivalent to computing the test set inputs using $\hat{\bfmX}_c^{ts} = \hat{\bfGamma}_{XY}^{ts} \, Y^{ts} + \hat{\bfmW}_X^{ts}$ since for large enough sample sizes we have that $\hat{\bfGamma}_{XA}^{tr} \approx \hat{\bfGamma}_{XA}^{ts}$ (assuming that the effects are stable across the training and test data) so that,
\begin{align}
\hat{\bfmX}_c^{ts} &= \bfmX^{ts} - \hat{\bfGamma}_{XA}^{tr} \, \bfmA^{ts} \approx \bfmX^{ts} - \hat{\bfGamma}_{XA}^{ts} \, \bfmA^{ts} = \hat{\bfGamma}_{XY}^{ts} \, Y^{ts} + \hat{\bfmW}_X^{ts}~.
\end{align}

\section{Results}

Before we present the main theoretical result of the paper, we first present the following result.

\begin{theorem}
For an anticausal prediction task influenced by a set of confounders $\bfmA$, the cross-covariance between the output variable, $Y$, and the inputs, $\bfmX$, is given by,
\begin{equation}
Cov(\bfmX, Y) \, = \, \bfGamma_{XY} + \bfGamma_{XA} \, Cov(\bfmA) \, \bfGamma_{YA}^T~,
\label{eq:cov.X.Y.main}
\end{equation}
while the asymptotic cross-covariances between $Y$ and the counterfactual inputs, $\bfmX_c$, and between $Y$ and the residualized inputs, $\bfmX_r$, are given respectively by,
\begin{align}
Cov(\bfmX_c, Y) \, &= \, \bfGamma_{XY}~, \label{eq:cov.Xc.Y.main} \\
Cov(\bfmX_r, Y) \, &= \, \bfGamma_{XY} (1 - \bfGamma_{YA} \, Cov(\bfmA) \, \bfGamma_{YA}^T)~. \label{eq:cov.Xr.Y.main}
\end{align}
\end{theorem}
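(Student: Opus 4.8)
The plan is to read off everything from the linear structural model in eqs.~\eqref{eq:X.model}--\eqref{eq:Y.model}, together with the orthogonality conditions on the error terms that are implied by the causal diagram in Figure~\ref{fig:confounded.anticausal.example}: namely that $\bfmW_X$ is uncorrelated with both $\bfmA$ and $W_Y$, and that $W_Y$ is uncorrelated with $\bfmA$. First I would record the two facts that serve as building blocks throughout. Since $Y = \bfGamma_{YA}\,\bfmA + W_Y$, bilinearity gives $Cov(\bfmW_X, Y) = Cov(\bfmW_X,\bfmA)\,\bfGamma_{YA}^T + Cov(\bfmW_X, W_Y) = 0$, and likewise $Cov(\bfmA, Y) = Cov(\bfmA)\,\bfGamma_{YA}^T + Cov(\bfmA, W_Y) = Cov(\bfmA)\,\bfGamma_{YA}^T$.

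For eq.~\eqref{eq:cov.X.Y.main} I would substitute $\bfmX = \bfGamma_{XA}\,\bfmA + \bfGamma_{XY}\,Y + \bfmW_X$ into $Cov(\bfmX, Y)$ and expand. The term in $Y$ contributes $\bfGamma_{XY}\,Var(Y) = \bfGamma_{XY}$ by standardization, the term in $\bfmA$ contributes $\bfGamma_{XA}\,Cov(\bfmA, Y) = \bfGamma_{XA}\,Cov(\bfmA)\,\bfGamma_{YA}^T$ using the second building block, and the $\bfmW_X$ term vanishes by the first. This is purely mechanical and presents no obstacle.

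For eq.~\eqref{eq:cov.Xc.Y.main} the key point is that, asymptotically, the OLS regression of $\bfmX$ on $(\bfmA, Y)$ recovers the true structural coefficients $\bfGamma_{XA}$ and $\bfGamma_{XY}$; this is exactly where the orthogonality of $\bfmW_X$ to both regressors is required, so that $\hat{\bfmW}_X \to \bfmW_X$ and the counterfactual features of eq.~\eqref{eq:training.set.inputs.ca} converge to $\bfmX_c = \bfGamma_{XY}\,Y + \bfmW_X$. Taking the covariance with $Y$ then leaves only $\bfGamma_{XY}\,Var(Y) = \bfGamma_{XY}$, since $Cov(\bfmW_X, Y) = 0$. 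The test-set construction of eq.~\eqref{eq:test.set.inputs.ca} yields the same asymptotic form by the equivalence already noted in the excerpt.

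For eq.~\eqref{eq:cov.Xr.Y.main} I would use the reduced model $\bfmX = \bfOmega_{XA}\,\bfmA + \bfmW_X^\star$ with $\bfmW_X^\star = \bfmW_X + \bfGamma_{XY}\,W_Y$, and observe that because $Cov(\bfmW_X^\star, \bfmA) = 0$ the regression of $\bfmX$ on $\bfmA$ asymptotically recovers $\bfOmega_{XA}$, so that $\bfmX_r \to \bfmW_X^\star$. Then $Cov(\bfmX_r, Y) = Cov(\bfmW_X, Y) + \bfGamma_{XY}\,Cov(W_Y, Y) = \bfGamma_{XY}\,Var(W_Y)$. The step that carries the real content, and the one I expect to be the main obstacle, is pinning down $Var(W_Y)$: imposing the standardization constraint $Var(Y) = 1$ on $Y = \bfGamma_{YA}\,\bfmA + W_Y$ gives $1 = \bfGamma_{YA}\,Cov(\bfmA)\,\bfGamma_{YA}^T + Var(W_Y)$, hence $Var(W_Y) = 1 - \bfGamma_{YA}\,Cov(\bfmA)\,\bfGamma_{YA}^T$, which supplies exactly the shrinkage factor in eq.~\eqref{eq:cov.Xr.Y.main}. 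This is also what makes the residualized covariance strictly smaller in magnitude than the counterfactual covariance $\bfGamma_{XY}$, foreshadowing the paper's main comparison.
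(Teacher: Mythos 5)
Your proposal is correct, and for the expressions $Cov(\bfmX,Y)$ and $Cov(\bfmX_c,Y)$ it follows the paper's proof essentially verbatim: expand the structural equations, invoke the orthogonality of the error terms, and use $Var(Y)=1$; the counterfactual case likewise rests on the asymptotic convergence of the OLS coefficients to $\bfGamma_{XA}$ and $\bfGamma_{XY}$ so that $\bfmX_c = \bfGamma_{XY}\,Y + \bfmW_X$. Where you genuinely diverge is in the residualized case. The paper computes $Cov(\bfmX_r,Y) = Cov(\bfmX,Y) - \bfOmega_{XA}\,Cov(\bfmA,Y)$ and obtains the shrinkage factor by direct cancellation after substituting $\bfOmega_{XA} = \bfGamma_{XA} + \bfGamma_{XY}\,\bfGamma_{YA}$; it never needs $Var(W_Y)$. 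You instead identify $\bfmX_r$ asymptotically with the reduced-form error $\bfmW_X^\star = \bfmW_X + \bfGamma_{XY}\,W_Y$ (justified, as you note, by $Cov(\bfmW_X^\star,\bfmA)=\mathbf{0}$), so that $Cov(\bfmX_r,Y) = \bfGamma_{XY}\,Var(W_Y)$, and then extract $Var(W_Y) = 1 - \bfGamma_{YA}\,Cov(\bfmA)\,\bfGamma_{YA}^T$ from the standardization constraint on $Y$. Both derivations are valid and use the same hypotheses; yours buys a cleaner structural interpretation of the result (the residualized covariance is exactly $\bfGamma_{XY}$ scaled by the unexplained variance of $Y$ given $\bfmA$, which makes the comparison with $Cov(\bfmX_c,Y)=\bfGamma_{XY}$ in Theorem~2 immediate), at the cost of invoking the standardization of $Y$ at one extra point, whereas the paper's cancellation is more mechanical but self-contained. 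One nitpick: the inequality this foreshadows is weak, not strict, since $\bfGamma_{YA}\,Cov(\bfmA)\,\bfGamma_{YA}^T$ can be zero.
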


\begin{theorem}
Under the conditions of Theorem 1, for each element $j$ of the vectors $Cov(\bfmX_c, Y)$ and $Cov(\bfmX_r, Y)$, we have that $|Cov(X_{c,j}, Y)| \geq |Cov(X_{r,j}, Y)|$.
\end{theorem}

The proofs of Theorems 1 and 2 are presented in Supplementary Section 1. In the special case of a single confounder variable $A$, equations (\ref{eq:cov.Xc.Y.main}) and (\ref{eq:cov.Xr.Y.main}) in Theorem 1 reduce to, $Cov(X_c, Y) = \gamma_{_{XY}}$ and $Cov(X_r, Y) = \gamma_{_{XY}} (1 - \gamma_{_{YA}}^2)$, and the result in Theorem 2 follows from,
$$
|Cov(X_c, Y)| = |\gamma_{_{XY}}| \; \geq \; |\gamma_{_{XY}}| (1 - \gamma_{_{YA}}^2) = |Cov(X_r, Y)|~,
$$
since $(1 - \gamma_{_{YA}}^2) \leq 1$ because $\gamma_{_{YA}}$ corresponds to the correlation between the $Y$ and $A$ variables\footnote{Direct application of Wright's method of path coefficients~\cite{wright1934} to the causal diagram $\xymatrix@-1.0pc{A \ar[r] \ar@/^0.5pc/[rr] & X & Y \ar[l]}$, shows that the marginal correlations among these three variables can be decomposed as $Cor(A, Y) = \gamma_{_{YA}}$, $Cor(A, X) = \gamma_{_{XA}} + \gamma_{_{YA}} \, \gamma_{_{XY}}$, and $Cor(X, Y) = \gamma_{_{XY}} + \gamma_{_{XA}} \, \gamma_{_{YA}}$ in terms of path coefficients.} and can only assume values between -1 and 1.

Observe, however, that while $|Cov(X_{c,j}, Y)| \geq |Cov(X_{r,j}, Y)|$, we have that $|Cov(X_{c,j}, Y)|$ might be greater or lesser than $|Cov(X, Y)| = |\gamma_{_{XY}} + \gamma_{_{XA}} \, \gamma_{_{YA}}|$ depending on whether $|\gamma_{_{XY}}|$ is greater or smaller than $|\gamma_{_{XY}} + \gamma_{_{XA}} \, \gamma_{_{YA}}|$. For instance, $|Cov(X_c, Y)| < |Cov(X, Y)|$ in situations where the spurious association contributed by the confounder (namely, $\gamma_{_{XA}} \, \gamma_{_{YA}}$) increases the strength of the association between $X$ and $Y$ (e.g., when $\gamma_{_{XY}} > 0$, $\gamma_{_{XA}} > 0$, and $\gamma_{_{YA}} > 0$), while $|Cov(X_c, Y)| > |Cov(X, Y)|$ in situations where $\gamma_{_{XA}} \, \gamma_{_{YA}}$ decreases the strength of the association (e.g., when $\gamma_{_{XY}} > 0$ but $\gamma_{_{XA}} \, \gamma_{_{YA}} < 0$ and $\gamma_{_{XY}} > |\gamma_{_{XA}} \, \gamma_{_{YA}}|$).

Now, observe that under the assumption of the absence of dataset shift~\cite{quionero2009} between training and test sets, we conjecture that the above result implies that the asymptotic predictive performance of linear learners trained with causality-aware counterfactual inputs outperforms the performance of linear learners trained on residualized inputs. Intuitively, this appears to be the case, since from Theorem 2 we have that for each input $X_j$, the linear association between the counterfactual input, $X_{c,j}$, and $Y$ is always stronger or equal than the linear association between the residual input, $X_{r,j}$, and $Y$. Since linear learners are only able to leverage linear associations between the inputs and the output for the predictions, it seems reasonable to expect that a linear learner trained with causality-aware counterfactual inputs will likely outperform the respective learner trained on the residualized inputs. In Supplementary Section 2 we actually prove (for a couple of special cases) that the expected mean squared error (MSE) for linear learners trained with a counterfactual feature is always smaller or equal than the expected MSE of models trained with the residualized feature, when sample size goes to infinity. (In the context of classification tasks, we conjecture that analogous results hold for performance metrics such as classification accuracy and area under the receiver operating characteristic curve.)

\section{Synthetic data illustrations}

Here, we present synthetic data illustrations of the points in the previous section for both regression and classification tasks. We evaluate predictive performance using mean squared error (MSE) in the regression task experiments, and accuracy (ACC) in the classification task experiments. Our experiments address both correctly specified and mispecified models, involving 2 input variables ($X_1$ and $X_2$) and 2 confounders. (See Supplementary Section 3 for a detailed description of the experiments). Figures \ref{fig:regression.task.experiments} and \ref{fig:classification.task.experiments} report the results.

\begin{figure}[!h]
\includegraphics[width=\linewidth]{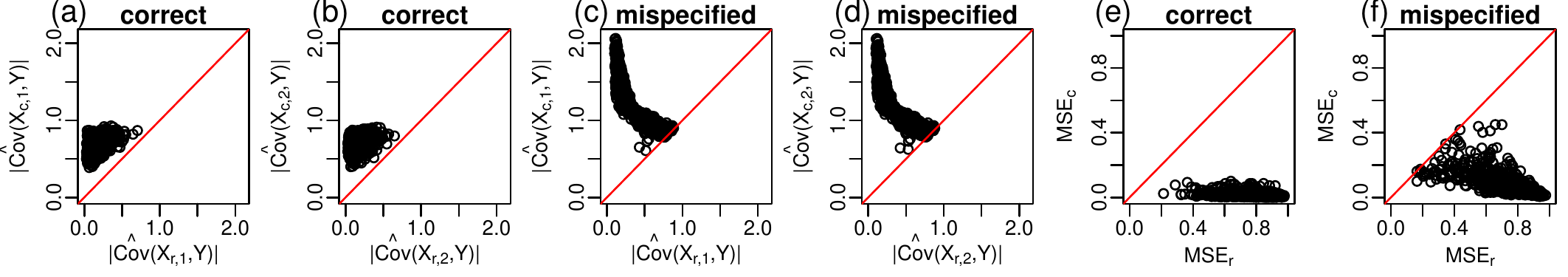}
\vskip -0.1in
\caption{Regression task experiments.}
\label{fig:regression.task.experiments}
\end{figure}

\begin{figure}[!h]
\includegraphics[width=\linewidth]{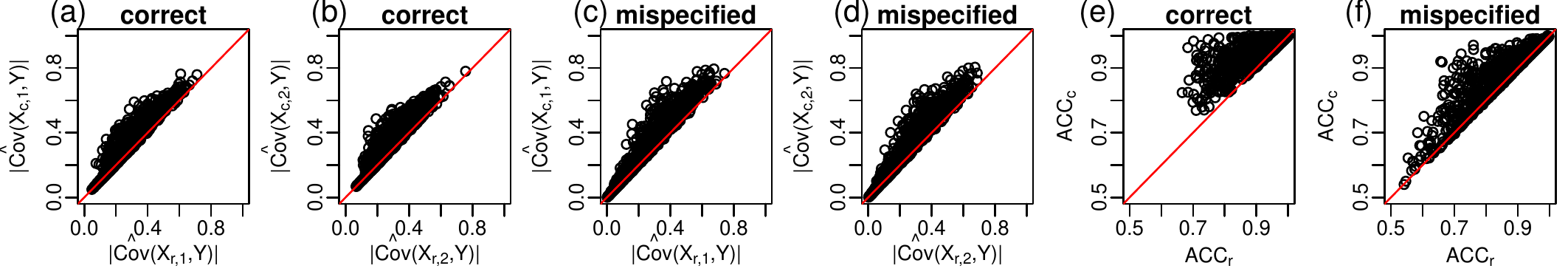}
\vskip -0.1in
\caption{Classification task experiments.}
\label{fig:classification.task.experiments}
\vskip -0.1in
\end{figure}

Panels a and b of Figures \ref{fig:regression.task.experiments} and \ref{fig:classification.task.experiments} illustrate the result from Theorem 2, showing that $|Cov(X_{c,j}, Y)| \geq |Cov(X_{r,j}, Y)|$ for both input variables $X_1$ and $X_2$, while panels c and d illustrate that the results still hold under model mispecification. Panels e and f in Figure \ref{fig:regression.task.experiments} show that $\mbox{MSE}_c \leq \mbox{MSE}_r$ for all simulations. Panels e and f in Figure \ref{fig:classification.task.experiments} show that $\mbox{ACC}_c \geq \mbox{ACC}_r$ for all simulations.

\section{Assessing the effectiveness of the confounding adjustment}

Our results suggest that, for linear learners, the causality-aware approach outperforms linear residualization, even when the true data-generation process does not follow the linear regression models adopted to process the inputs. It is important to keep in mind, however, that the causality-aware approach (as well as, the residualization) might fail to fully remove confounding from the predictions when the regression models are mispecified.

Here, we apply the methodology described in~\cite{chaibubneto2019} to evaluate the effectiveness of the causality-aware approach in our experiments. The approach is described in more detail in Supplementary Section 4, but the key idea is that, in anticausal prediction tasks, if the causality-aware adjustment was effective, then the only conditional independence relationship among the $Y$, $A$, and $\hat{Y}_c$ test set variables is given by $\hat{Y}_c^{ts} \ci A^{ts} \mid Y^{ts}$ (i.e., the prediction $\hat{Y}_c^{ts}$ is independent of the confounder given the outcome in the test set data).

Figure \ref{fig:ci.tests} reports the correlations and partial correlations between $\hat{Y}_c^{ts}$, $Y^{ts}$, and $A_1^{ts}$ for the regression experiments presented before. Panel a reports the results for the correctly specified models. The fact that the distribution of $\hat{cor}(\hat{Y}_c, A \mid Y)$ is tightly concentrated around 0, shows that the predictions are unconfounded. Panel b reports the results for the mispecified models and shows that $A_1$ is still confounding the predictions in this case (note the large spread of the $\hat{cor}(\hat{Y}_c, A \mid Y)$ distribution). These results illustrate that in many practical applications, the adoption of simple linear regression models for confounding adjustment might lead to unreliable inferences, and point to the need for more flexible models. To this end, we repeated these experiments replacing the linear models by more flexible additive models~\cite{hastie1990}, which are better able to capture non-linearities in the data (see Supplementary Section 5 for details on the additive-model based adjustments). Panels c and d report the results and show that the additive models effectively removed confounding in both the correctly specified (panel c) and mispecified (panel d) cases. (In particular, note how the adoption of the additive-models reduced the spread of the $\hat{cor}(\hat{Y}_c, A \mid Y)$ distribution in panel d compared to panel b.) Finally, panel e compares the MSE obtained with linear-regression vs additive-model adjustments. Note that for the correctly specified models, the MSE distributions are similar since the additive-model adapts to the data and ``mimics" a linear model in this case. For the mispecified models, on the other hand, we see higher MSE scores for the additive-model based adjustment because it effectively removed the confounder contribution from the predictive performance (recall that in our simulations the effects $\beta_{{X_j}{Y}}$, $\beta_{{Y}{A_j}}$, and $\beta_{{X_j}{A_j}}$ were positive, so that confounding increased the association between the inputs and output in our simulations and improved predictive performance.)\footnote{At this point it is important to clarify that one may argue that one should refrain from performing confounding adjustment whenever: (i) confounding is stable (i.e., the confounder effects on the inputs and outcome variables do not change between the training and test environments/target populations); and (ii) the confounder improves the predictive performance (by increasing the association between the inputs and output). While this is (strictly speaking) true, confounding generated by selection biases tend to be unstable and change across different environments/test sets. As described before, in this paper we assume that that a researcher has access to independent and identically distributed (i.i.d.) training and test sets derived from a developement population that is not representative of the target populations where the ML algorithm will be deployed. In this context, one is usually willing the perform confounding adjustment even when it decreases predictive performance.} Finally, Figure S4 reports scatterplots of the additive-model residualization vs the additive-model causality-aware approaches and provides empirical evidence in favor of the causality-aware adjustment.

\begin{figure*}[t]
\centerline{\includegraphics[width=\linewidth]{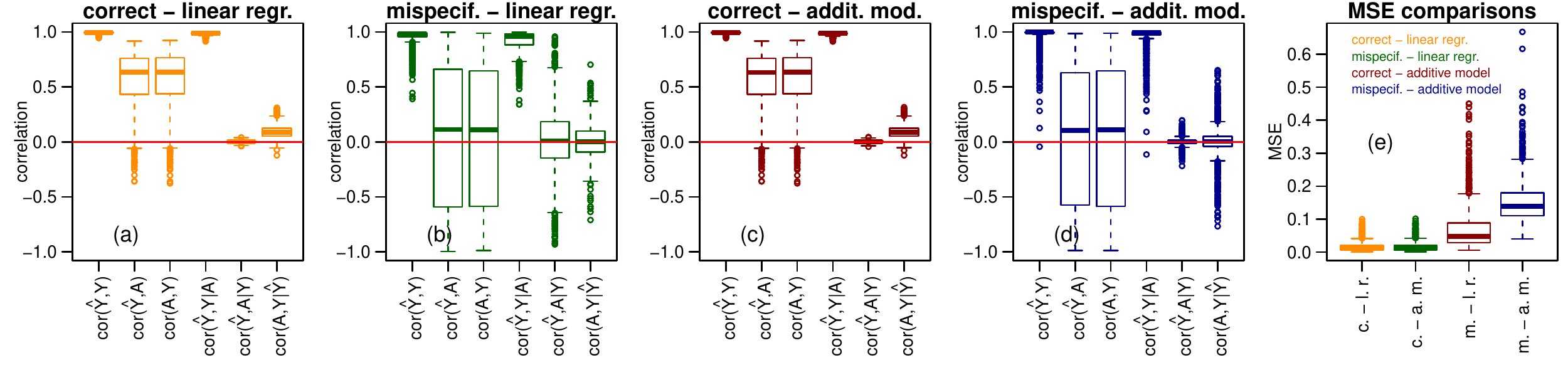}}
\vskip -0.1in
\caption{Assessment of the adjustment effectiveness.}
\vskip -0.2in
\label{fig:ci.tests}
\end{figure*}

\section{On the instability of the residualization approach}

So far, our analytical results and illustrations relied on the assumption of no dataset shift between the training and test sets. In this section, we illustrate how the causality-aware approach is more stable than residualization under dataset shifts of the joint distribution of the confounders and outcome variables. The stability properties of the causality-aware approach are presented in~\cite{achaibubneto2020a} where it was shown that because the expected MSE of the causality-aware approach does not depend on $A^{ts}$, the approach is stable w.r.t. shifts in $Cov(A^{ts}, Y^{ts})$. Here, we show that this is not the case for the residualization approach. For simplicity, we present the result for the toy model in Figure \ref{fig:shift.example}, where the double arrow represents an association generated by a selection mechanism, and where $Cov(A, Y) = \sigma_{AY}$, $Var(A) = \sigma_{AA}$, and $Var(Y) = \sigma_{YY}$. As shown in Supplementary Section 6.1, direct calculation of the expected MSE for the residualization approach shows that,
\begin{align*}
E[MSE_r] & = Var(Y^{ts}) + (\hat{\beta}_{r}^{tr})^2 Var(X_r^{ts}) - 2 \hat{\beta}_{r}^{tr} Cov(X_r^{ts}, Y^{ts})~, \hspace{0.3cm} \mbox{where,}
\end{align*}
\begin{align}
&Var(X_{r}^{ts}) = \sigma^2_X + \beta_{XY}^2 \, \sigma_{YY}^{ts} - (\beta_{XY}^2 \, (\sigma_{AY}^{ts})^2)/\sigma_{AA}^{ts}~, \\
&Cov(X_{r}^{ts}, Y^{ts}) = \beta_{XY} \, \sigma_{YY}^{ts} - (\beta_{XY} \, (\sigma_{AY}^{ts})^2)/\sigma_{AA}^{ts}~,
\end{align}
are still functions of $\sigma_{AY}^{ts}$. This shows that the expected MSE of the residualization approach will be unstable w.r.t. shifts in $Cov(A^{ts}, Y^{ts})$. Supplementary Section 6.2 shows that this result holds in general for linear structural causal models.

\subsection{Dataset shift experiments}

In our experiments, we generated dataset shift in $P(A, Y)$ by varying $Cov(A, Y) = \sigma_{AY}$, $Var(A) = \sigma_{AA}$, and $Var(Y) = \sigma_{YY}$ between the training and test sets. We, nonetheless, use the same values of $\beta_{XA}$, $\beta_{XY}$, and $\sigma^2_X$ in the generation of the training and test features, so that only the joint distribution $P(A, Y)$ differs between the training and test sets (while $P(X \mid A, Y)$ is stable).
\begin{wrapfigure}{r}{0.22\textwidth}
$$
\xymatrix@-1.2pc{
& *+[F-:<10pt>]{A} \ar[dl]_{\beta_{XA}} \ar@/^1.0pc/@{<->}[dr]^{\sigma_{AY}} &  \\
*+[F-:<10pt>]{X} && *+[F-:<10pt>]{Y} \ar[ll]^{\beta_{XY}} \\
}
$$
  \caption{}
  \label{fig:shift.example}
  \vskip -0.1in
\end{wrapfigure}

In our first experiment we generate 9 distinct test sets using different values of $\sigma_{AY}$ and $\sigma_{AA}$ relative to the training set, but where $\sigma_{YY}$ was still the same. (See Supplementary Section 6.3 for a detailed description of the experiments). Figure \ref{fig:dataset.shift.1} reports the results and clearly shows that while the predictive performance of the causality-aware approach was stable across the test sets the residualization approach was fairly unstable. Panel b shows the results of the first 3 simulations in more detail. Each line presents the MSE of the same trained model across the 9 distinct test sets, showing that the residualization results (red lines) vary widely across the test sets, while the causality-aware (blue lines) are fairly stable. Panel c reports the distributions of stability error (i.e., the standard deviation of the MSE scores across the 9 test sets) for both approaches.
\begin{figure}[!h]
\centerline{\includegraphics[width=\linewidth]{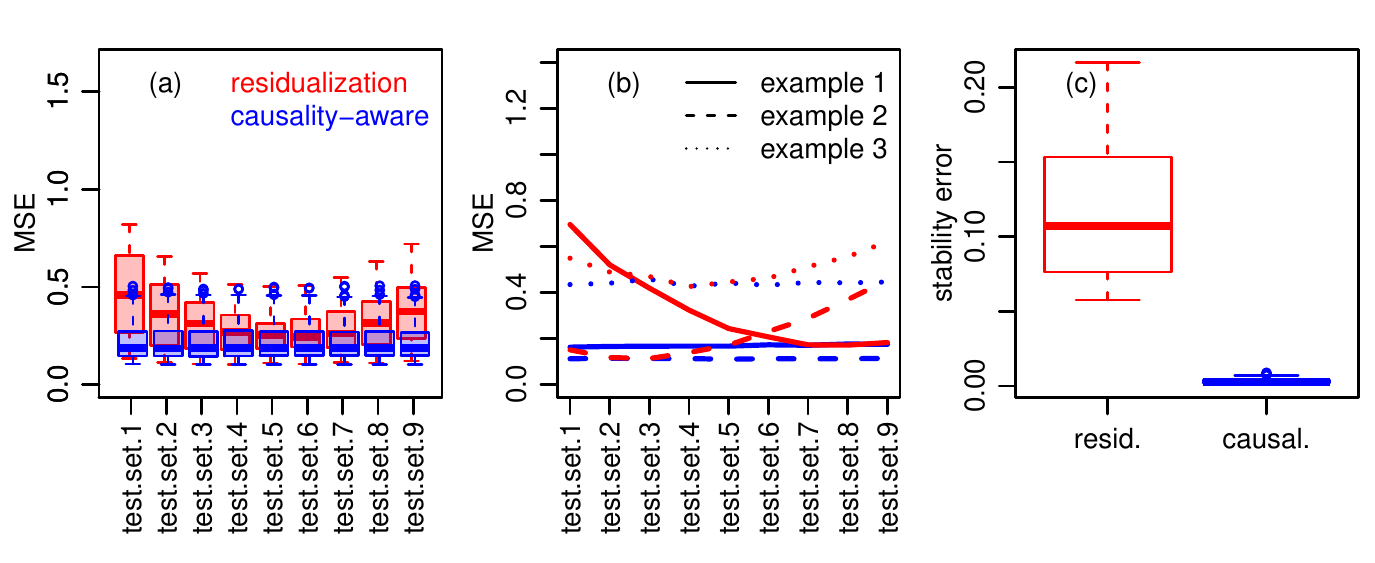}}
\vskip -0.1in
\caption{Stability illustrations, with fixed $Var(Y^{ts})$.}
\label{fig:dataset.shift.1}
\end{figure}

Observe, nonetheless, that because the expected MSE of any approach will, in general, depend on the variance of $Y^{ts}$ we performed an additional simulation study (Supplementary Figure S6) where we varied $Var(Y^{ts})$ from 1 to 3 across the 9 test sets. The results show that, while $MSE_c$ also changed across the test sets, the causality-aware approach is still much more stable than residualization.

These results suggest that the causality-aware approach still outperforms the residualization approach in the presence of dataset shift in the joint distribution of $P(A, Y)$. Both approaches, however, assume that the distribution $P(X \mid A, Y)$ is stable across the training and test sets. We point out, nonetheless, that in some important domains such as in diagnostic health applications dataset shifts on $P(A, Y)$ tend to be much more common than dataset shifts on $P(X \mid A, Y)$\footnote{For concreteness, suppose the goal is to classify mild vs severe cases of a given disease using the disease symptoms as inputs. Assume that gender is a confounder and suppose that gender is positively associated with the severe cases in the training data. This classifier will face dataset shift in $P(A, Y)$ whenever it is applied in a population with a different degree of association between gender and the disease labels than the association observed on the training data. This is arguably a very common situation given that, in practice, due to selection biases during data collection, ML algorithms are often trained on non-representative data. On the other hand, we will only observe dataset shift in $P(X \mid A, Y)$ in situations where there are physiological differences in the symptoms driven by gender and disease status in the individuals in the training set relative to the individuals in the distinct populations where the classifier will be deployed.}.

\section{Final remarks}

In this paper, we compare linear residualization against the causality-aware confounding adjustment. Our results suggest that the causality-aware approach outperforms residualization even when the regression models are mispecified. In this situation, however, the causality-aware approach might fail to fully remove the influence of the confounders from the predictions, and more flexible modeling approaches are needed. To this end, we describe how additive-models can help when linear regression fails. Furthermore, empirical comparisons between additive-model residualization and additive-model causality-aware adjustments (see Figure S4) and our stability results still favor the causality-aware approach. Taken together, these observations suggest that linear residualization can be safely replaced by causality-aware confounding adjustment in ML applications based on linear learners.

For non-linear learners, however, more research is needed, and we leave non-linear extensions of our results for future work. We point out, however, that the causality-aware approach can still be sometimes used to deconfound non-linear learners. As described in detail in~\cite{achaibubneto2020b}, we can use standard linear models to deconfound the feature representations learned by deep neural network (DNN) models. The key idea is that by training a highly accurate DNN using softmax activation at the classification layer, we have that, by construction, the feature representation learned by the last layer prior to the output layer will fit well a logistic regression model (since the softmax classification used to generate the output of the DNN is essentially performing logistic regression classification). This observation opens up the applicability of the causality-aware approach to a widely used class of non-linear learners.

The present work has focused on anticausal prediction tasks, as neuroimage and other health related prediction applications are often anticausal in nature. We leave the investigation of causal prediction tasks (where the inputs have a causal effect on the output) for future work.

\clearpage

\beginsupplement

\setcounter{section}{0}
\setcounter{theorem}{0}

\noindent {\huge SUPPLEMENT}

\section{Proofs of Theorems 1 and 2}

For the proof of Theorem 1 we will use the following properties of the cross-covariance operator\footnote{The cross-covariance, $Cov(\bfmA, \bfmB)$, between two vectors of random variables $\bfmA = (A_1, \ldots, A_{N_A})^T$ and $\bfmB = (B_1, \ldots, B_{N_B})^T$ is given by the $N_A \times N_B$ matrix with elements $Cov(A_i, B_j)$.}:
\begin{enumerate}
\item $Cov(\bfmZ_1 + \bfmZ_2, \bfmZ_3) = Cov(\bfmZ_1, \bfmZ_3) + Cov(\bfmZ_2, \bfmZ_3)$,
\item $Cov(\bfmB_1 \, \bfmZ_1, \bfmB_2 \, \bfmZ_2) = \bfmB_1 \, Cov(\bfmZ_1, \bfmZ_2) \, \bfmB_2^T$, where $\bfmB_1$ and $\bfmB_2$ are constant matrices
\item $Cov(\bfmZ, \bfmZ) = Cov(\bfmZ)$, where $Cov(\bfmZ)$ is the variance covariance matrix of $\bfmZ$.
\end{enumerate}

\subsection{Proof of Theorem 1}

For convenience, we reproduce Theorem 1 below.

\begin{theorem}
For an anticausal prediction task influenced by a set of confounders $\bfmA$, the cross-covariance between the output variable, $Y$, and the inputs, $\bfmX$, is given by,
\begin{equation}
Cov(\bfmX, Y) \, = \, \bfGamma_{XY} + \bfGamma_{XA} \, Cov(\bfmA) \, \bfGamma_{YA}^T~,
\label{eq:cov.X.Y}
\end{equation}
while the asymptotic cross-covariances between $Y$ and the counterfactual inputs, $\bfmX_c$, and between $Y$ and the residualized inputs, $\bfmX_r$, are given respectively by,
\begin{align}
Cov(\bfmX_c, Y) \, &= \, \bfGamma_{XY}~, \label{eq:cov.Xc.Y} \\
Cov(\bfmX_r, Y) \, &= \, \bfGamma_{XY} (1 - \bfGamma_{YA} \, Cov(\bfmA) \, \bfGamma_{YA}^T)~. \label{eq:cov.Xr.Y}
\end{align}
\end{theorem}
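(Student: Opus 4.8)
The plan is to read each cross-covariance straight off the structural equations (\ref{eq:X.model}) and (\ref{eq:Y.model}) using the three listed properties of the cross-covariance operator, together with the orthogonality assumptions encoded in Figure \ref{fig:confounded.anticausal.example}: the error $\bfmW_X$ is uncorrelated with both $\bfmA$ and $W_Y$, and $W_Y$ is uncorrelated with $\bfmA$. The standardization $Var(Y) = 1$ (equivalently $Cov(Y, Y) = 1$) will be invoked repeatedly.

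For (\ref{eq:cov.X.Y}), I would substitute $\bfmX = \bfGamma_{XA}\,\bfmA + \bfGamma_{XY}\,Y + \bfmW_X$ into $Cov(\bfmX, Y)$ and expand by additivity into three terms. The middle term is $\bfGamma_{XY}\,Cov(Y,Y) = \bfGamma_{XY}$ by the normalization; the term $Cov(\bfmW_X, Y)$ vanishes because $\bfmW_X$ is orthogonal to both $\bfmA$ and $W_Y$ while $Y$ is a linear function of those; and the first term reduces, via $Cov(\bfmA, Y) = Cov(\bfmA)\,\bfGamma_{YA}^T$, to $\bfGamma_{XA}\,Cov(\bfmA)\,\bfGamma_{YA}^T$. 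Summing yields the claim.

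The counterfactual identity (\ref{eq:cov.Xc.Y}) follows once one observes that, as the sample size grows, the fitted coefficients and residuals of the regression of $\bfmX$ on $(\bfmA, Y)$ converge to their population values, so $\hat{\bfmX}_c = \hat{\bfGamma}_{XY}\,Y + \hat{\bfmW}_X$ has probability limit $\bfGamma_{XY}\,Y + \bfmW_X$; then $Cov(\bfmX_c, Y) = \bfGamma_{XY}\,Cov(Y,Y) + Cov(\bfmW_X, Y) = \bfGamma_{XY}$. For the residualization identity (\ref{eq:cov.Xr.Y}) the crucial preliminary is to pin down the probability limit of $\hat{\bfOmega}_{XA}$: since the reduced-form error $\bfmW_X^\star = \bfmW_X + \bfGamma_{XY}\,W_Y$ remains uncorrelated with $\bfmA$, the population regression of $\bfmX$ on $\bfmA$ alone recovers $\bfOmega_{XA} = \bfGamma_{XA} + \bfGamma_{XY}\,\bfGamma_{YA}$, so asymptotically $\bfmX_r = \bfmX - \bfOmega_{XA}\,\bfmA = \bfmW_X^\star$. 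Then $Cov(\bfmX_r, Y) = Cov(\bfmW_X, Y) + \bfGamma_{XY}\,Cov(W_Y, Y) = \bfGamma_{XY}\,Var(W_Y)$, and I close the argument with the identity $Var(W_Y) = 1 - \bfGamma_{YA}\,Cov(\bfmA)\,\bfGamma_{YA}^T$, obtained by decomposing $1 = Var(Y) = \bfGamma_{YA}\,Cov(\bfmA)\,\bfGamma_{YA}^T + Var(W_Y)$.

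The calculations are otherwise routine matrix-covariance manipulations, so the one step demanding genuine care is identifying the asymptotic target of the residualization regression: recognizing that regressing $\bfmX$ on $\bfmA$ recovers $\bfOmega_{XA}$ rather than $\bfGamma_{XA}$, because $Y$ is absorbed into the reduced-form error, and verifying that this error stays orthogonal to $\bfmA$ so that ordinary least squares is consistent for $\bfOmega_{XA}$. The companion variance identity for $W_Y$ is the only other non-mechanical ingredient, and it rests entirely on the $Var(Y) = 1$ normalization.
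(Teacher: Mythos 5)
Your proof is correct and follows essentially the same route as the paper's: read each cross-covariance off the structural equations using bilinearity and the orthogonality of the error terms, after identifying the probability limits $\bfmX_c = \bfGamma_{XY}\,Y + \bfmW_X$ and $\bfmX_r = \bfmX - \bfOmega_{XA}\,\bfmA$. The only divergence is in the residualization identity: the paper expands $Cov(\bfmX - \bfOmega_{XA}\,\bfmA, Y) = Cov(\bfmX, Y) - \bfOmega_{XA}\,Cov(\bfmA, Y)$ and cancels the $\bfGamma_{XA}\,Cov(\bfmA)\,\bfGamma_{YA}^T$ terms directly, whereas you identify $\bfmX_r$ with the reduced-form error $\bfmW_X^\star = \bfmW_X + \bfGamma_{XY}\,W_Y$ and reduce the claim to the variance decomposition $Var(W_Y) = 1 - \bfGamma_{YA}\,Cov(\bfmA)\,\bfGamma_{YA}^T$ implied by $Var(Y) = 1$; both computations are routine and yield the same expression. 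Your version also makes explicit a point the paper leaves implicit, namely that $\hat{\bfOmega}_{XA}$ is consistent for $\bfGamma_{XA} + \bfGamma_{XY}\,\bfGamma_{YA}$ precisely because the reduced-form error stays orthogonal to $\bfmA$, which is a welcome bit of extra care rather than a gap in either argument.
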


\begin{proof}

We first derive the result in equation (\ref{eq:cov.X.Y}). Direct computation shows that,
\begin{align*}
Cov(\bfmX, Y) &= Cov(\bfGamma_{XY} \, Y + \bfGamma_{XA} \, \bfmA + \bfmW_X, Y) \\
&= \bfGamma_{XY} \, Cov(Y, Y) + \bfGamma_{XA} \, Cov(\bfmA, Y) \\
&= \bfGamma_{XY} + \bfGamma_{XA} \, Cov(\bfmA, \, \bfGamma_{YA} \, \bfmA + W_Y) \\
&= \bfGamma_{XY} + \bfGamma_{XA} \, Cov(\bfmA, \, \bfmA) \, \bfGamma_{YA}^T \\
&= \bfGamma_{XY} + \bfGamma_{XA} \, Cov(\bfmA) \, \bfGamma_{YA}^T~,
\end{align*}
where the the first equality follows from the fact that $\bfmX = \bfGamma_{XY} \, Y + \bfGamma_{XA} \, \bfmA + \bfmW_X$; the second equality from the fact that $Cov(\bfmW_X, Y) = {\bf 0}$; the third equality from $Cov(Y,Y) = Var(Y) = 1$, and $Y = \bfGamma_{YA} \, \bfmA + W_Y$; the fourth equality from $Cov(\bfmA, W_Y) = {\bf 0}$; and the fifty equality from $Cov(\bfmA, \, \bfmA) = Cov(\bfmA)$.

Now, consider the causality-aware counterfactual input case. As described in the main text, the training set counterfactual inputs are computed as $\hat{\bfmX}_c^{tr} = \hat{\bfGamma}_{XY}^{tr} \, Y^{tr} + \hat{\bfmW}_X^{tr}$, where $\hat{\bfmW}_X^{tr} = \bfmX^{tr} - \hat{\bfGamma}_{XY}^{tr} \, Y^{tr} - \hat{\bfGamma}_{XA}^{tr} \, \bfmA^{tr}$, whereas the test set counterfactual inputs are computed as $\hat{\bfmX}_c^{ts} = \bfmX^{ts} - \hat{\bfGamma}_{XA}^{tr} \, \bfmA^{ts}$. Note that, as the sample size of the training set increases to infinity, we have that $\hat{\bfGamma}_{XY}^{tr}$ and $\hat{\bfGamma}_{XA}^{tr}$ converge, respectively, to $\bfGamma_{XY}$ and $\bfGamma_{XA}$. This implies that $\hat{\bfmX}_c^{tr}$ converges to $\bfmX_c = \bfGamma_{XY} \, Y + \bfmW_X$, where $\bfmW_X = \bfmX - \bfGamma_{XY} \, Y - \bfGamma_{XA} \, \bfmA$. Now, assuming that the joint distribution of the inputs and confounders is the same in the training and test set, we have that $\hat{\bfmX}_c^{ts}$ will also converge to $\bfmX_c = \bfGamma_{XY} \, Y + \bfmW_X$, since $\hat{\bfmX}_c^{ts}$ converges to $\bfmX - \bfGamma_{XA} \, \bfmA$ and,
$$
\bfmX - \bfGamma_{XA} \, \bfmA = \bfGamma_{XA} \, \bfmA  + \bfGamma_{XY} \, Y + \bfmW_X - \bfGamma_{XA} \, \bfmA = \bfGamma_{XY} \, Y + \bfmW_X~.
$$
Therefore, we have that direct computation of $Cov(\bfmX_c, Y)$ shows that,
\begin{align*}
Cov(\bfmX_c, Y) &= Cov(\bfGamma_{XY} \, Y + \bfmW_X, Y) \\
&= \bfGamma_{XY} \, Cov(Y, Y) \\
&= \bfGamma_{XY}~.
\end{align*}

Finally, consider the residual inputs, $\bfmX_r$. As described in the main text, $\bfmX_r$ is computed as $\hat{\bfmX}_r = \bfmX - \hat{\bfOmega}_{XA} \, \bfmA$. As sample size increases, $\hat{\bfmX}_r$ converges to $\bfmX_r = \bfmX - \bfOmega_{XA} \, \bfmA$, where $\bfOmega_{XA} = \bfGamma_{XA} + \bfGamma_{XY} \, \bfGamma_{YA}$. Direct computation of $Cov(\bfmX_r, Y)$ shows that,
\begin{align*}
Cov(\bfmX_r, Y) &= Cov(\bfmX - \bfOmega_{XA} \, \bfmA, Y) \\
&= Cov(\bfmX, Y) - \bfOmega_{XA} \, Cov(\bfmA, Y) \\
&= \bfGamma_{XY} + \bfGamma_{XA} \, Cov(\bfmA) \, \bfGamma_{YA}^T - \bfOmega_{XA} \, Cov(\bfmA, \, \bfGamma_{YA} \, \bfmA + W_Y) \\
&= \bfGamma_{XY} + \bfGamma_{XA} \, Cov(\bfmA) \, \bfGamma_{YA}^T - \bfOmega_{XA} \, Cov(\bfmA) \, \bfGamma_{YA}^T \\
&= \bfGamma_{XY} + \bfGamma_{XA} \, Cov(\bfmA) \, \bfGamma_{YA}^T - (\bfGamma_{XA} + \bfGamma_{XY} \, \bfGamma_{YA}) \, Cov(\bfmA) \, \bfGamma_{YA}^T \\
&= \bfGamma_{XY} - \bfGamma_{XY} \, \bfGamma_{YA} \, Cov(\bfmA) \, \bfGamma_{YA}^T \\
&= \bfGamma_{XY} (1 - \bfGamma_{YA} \, Cov(\bfmA) \, \bfGamma_{YA}^T)~.
\end{align*}
\end{proof}

\subsection{Proof of Theorem 2}

For the sake of clarity, we first prove Theorem 2 (reproduced below) in the special case where the confounder variables are uncorrelated. The general case of correlated confounders is proved in Section 1.2.2.

\begin{theorem}
Under the conditions of Theorem 1, for each element $j$ of the vectors $Cov(\bfmX_c, Y)$ and $Cov(\bfmX_r, Y)$, we have that $|Cov(X_{c,j}, Y)| \geq |Cov(X_{r,j}, Y)|$.
\end{theorem}

\subsubsection{The uncorrelated confounders case}

\begin{proof}
In the uncorrelated case, it follows that $Cov(\bfmA) = Cor(\bfmA) = \bfI_k$ since $Cov(A_i, A_i') = 0$ for $i \not= i'$, and $Cov(A_i, A_i) = Var(A_i) = 1$, for $i = 1, \ldots, k$, since $A_i$ is standardized. Hence, $Cov(\bfmX_r, Y) = \bfGamma_{XY} (1 - \bfGamma_{YA} \, \bfGamma_{YA}^T)$. Note, as well, that for the anticausal model in Figure 1 in the main text, we have that each entry $i$ of the $1 \times k$ matrix $\bfGamma_{YA}$ corresponds to $\gamma_{Y A_i} = Cov(Y, A_i) = Cor(Y, A_i)$, so that $\bfGamma_{YA} \, \bfGamma_{YA}^T = Cor(Y, \bfmA) \, Cor(Y, \bfmA)^T = \sum_{i=1}^k Cor(Y, A_i)^2$. Now, observe that $0 \leq \bfGamma_{YA} \, \bfGamma_{YA}^T \leq 1$, since it corresponds to the multiple correlation coefficient between $Y$ and $\bfmA$ in the special case where the confounders are uncorrelated. (Recall that, by definition, the multiple correlation coefficient is computed as $R^2_{Y\bfmA} = Cor(Y, \bfmA) \, Cor(\bfmA)^{-1} \, Cor(Y, \bfmA)^T$, and reduces to $Cor(Y, \bfmA) \, Cor(Y, \bfmA)^T$ in the special case where $Cor(\bfmA) = \bfI_k$.) Hence, it follows that $0 \leq (1 - \bfGamma_{YA} \, \bfGamma_{YA}^T) \leq 1$. Therefore, it follows that,
$$
|Cov(\bfmX_c, Y)| = |\bfGamma_{XY}| \, \geq \, |\bfGamma_{XY}| \, (1 - \bfGamma_{YA} \, \bfGamma_{YA}^T) = |Cov(\bfmX_r, Y)|~,
$$
or, equivalently, $|Cov(X_{c,j}, Y)| = |\gamma_{_{X_j,Y}}| \geq |\gamma_{_{X_j,Y}}| (1 - \sum_{i=1}^k Cor(Y, A_i)^2) = |Cov(X_{r,j}, Y)|$, for each input variable $j$.
\end{proof}

\subsubsection{The correlated confounders case}

The key idea to prove Theorem 2 in the general correlated confounders case is to recast the problem in terms of a singular value decomposition (SVD) of the confounder data. But first, we prove the following lemma that will be needed for the proof. (Note that for this proof, we also make a notational distinction between sets of random variables, and the respective data matrices. That is, we still represent sets of random variables in italic and boldface, e.g., $\bfmX = (X_1, \ldots, X_p)^T$ and $\bfmA = (A_1, \ldots, A_k)^T$, whereas data matrices are represented in boldface, namely, $\bfX$ is a $n \times p$ input data matrix, $\bfA$ is a $n \times k$ input data matrix, and $\bfY$ is a $n \times 1$ output data matrix.)

\begin{lemma}
Let $\bfA = \bfU \, \bfD \, \bfV^T$ represent a SVD of the confounder data matrix $\bfA$. By scaling $\bfU$ as $\tilde{\bfU} = (n-1)^{\frac{1}{2}} \, \bfU$ and $\bfV$ as $\tilde{\bfV} = (n-1)^{-\frac{1}{2}} \, \bfV$, we still obtain a valid SVD of $\bfA = \tilde{\bfU} \, \bfD \, \tilde{\bfV}^T$, but where $\tilde{\bfU}$ represents a scaled matrix, whose column vectors are uncorrelated and have variance exactly equal to 1.
\end{lemma}

\begin{proof}
\vskip -0.1in
Consider the singular value decomposition of confounder data matrix, $\bfA = \bfU \, \bfD \, \bfV^T$, where $\bfU$ is a $n \times k$ matrix of orthonormal eigenvectors of $\bfA \bfA^T$, $\bfD$ is a $k \times k$ diagonal matrix of singular values, and $\bfV$ is a $k \times k$ matrix of orthonormal eigenvectors of $\bfA^T \bfA$.

First, note that since $\bfA$ is a scaled matrix, it follows that the sample covariance of $\bfmA$ is given by,
\begin{align}
\hat{Cov(\bfmA)} &= (n-1)^{-1}\bfA^T \bfA = (n-1)^{-1} \bfV \, \bfD^T \, \bfU^T \, \bfU \, \bfD \, \bfV^T = (n-1)^{-1} \bfV \, \bfD^2 \, \bfV^T \nonumber \\
&= [(n-1)^{-\frac{1}{2}} \bfV] \, \bfD^2 \, [(n-1)^{-\frac{1}{2}} \bfV]^T~,
\label{eq:cov.A}
\end{align}
where the third equality follows from the fact $\bfU^T \bfU = \bfI_k$ since $\bfU$ is orthonormal.

Now, observe that while the SVD will produce a matrix $\bfU$ whose columns are orthogonal, the variance of each element of $\bfU$ will not be 1. Hence, we need to obtain a scaled version of the variable $\bfmU$, denoted $\tilde{\bfmU}$, such that $Cov(\tilde{\bfmU}) = \bfI_k$, that is, we need to find out a $\tilde{\bfmU}$ such that,
\begin{equation}
\hat{Cov}(\tilde{\bfmU}) = (n-1)^{-1}\tilde{\bfU}^T \tilde{\bfU} = \bfI_k.
\label{eq:eq5}
\end{equation}
Now, because by construction we have that $\bfU^T \bfU = \bfI_k$ it follows that by choosing $\tilde{\bfU} = (n-1)^{\frac{1}{2}} \bfU$ we have that eq. (\ref{eq:eq5}) is satisfied since,
\begin{equation}
\hat{Cov}(\tilde{\bfmU}) = (n-1)^{-1} ((n-1)^{\frac{1}{2}} \bfU)^T ((n-1)^{\frac{1}{2}} \bfU) = \bfU^T \bfU = \bfI_k~,
\end{equation}
so that $\tilde{\bfU}$ provides a scaled version of $\bfU$, whose columns are still orthogonal, but where the variance of the elements of each column of $\tilde{\bfU}$ is exactly 1. From the above, it follows that by rescaling $\bfU$ and $\bfV$ matrices as,
\begin{equation}
\tilde{\bfU} = (n-1)^{\frac{1}{2}} \, \bfU~, \hspace{0.5cm}
\tilde{\bfV} = (n-1)^{-\frac{1}{2}} \, \bfV~,
\end{equation}
we still obtain a valid singular value decomposition of $\bfA$, since,
$$
\tilde{\bfU} \, \bfD \, \tilde{\bfV}^T = [(n-1)^{\frac{1}{2}} \bfU] \, \bfD \, [(n-1)^{-\frac{1}{2}}\bfV]^T = \bfU \, \bfD \, \bfV^T = \bfA~.
$$
\end{proof}

We now prove Theorem 2 in the general case of correlated confounders.

\begin{proof}
It suffices to show that we can always reparameterize our linear models using the full rank (scaled) singular value decomposition (SVD) of the confounder data, $\bfA = \tilde{\bfU} \, \bfD \, \tilde{\bfV}^T$, presented in Lemma 1, where we replace the matrix versions of the original models,
\begin{align*}
\bfX &= \bfA \, \bfGamma_{XA}^T + \bfY \, \bfGamma_{XY}^T + \bfW_X~, \\
\bfY &= \bfA \, \bfGamma_{YA}^T + \bfW_Y~,
\end{align*}
where $\bfX$ and $\bfW_X$ have dimension $n \times p$, and $\bfA$ and $\bfY$ are $n \times k$ and $n \times 1$ matrices, respectively, by the SVD matrix regression models,
\begin{align*}
\bfX &= \tilde{\bfU} \, \bfGamma_{X\tilde{U}}^T + \bfY \, \bfGamma_{XY}^T + \bfW_X~, \\
\bfY &= \tilde{\bfU} \, \bfGamma_{Y\tilde{U}}^T + \bfW_Y~,
\end{align*}
where $\bfGamma_{X\tilde{U}} = \bfGamma_{XA} \, \tilde{\bfV} \, \bfD$ and $\bfGamma_{Y\tilde{U}} = \bfGamma_{YA} \, \tilde{\bfV} \, \bfD$, and where $\tilde{\bfU}$ is a $n \times k$ matrix whose $k$ column vectors are orthogonal to each other, and have variances exactly equal to 1. By adopting the above reparameterization, we effectively replace the original correlated confounder variables $A_j$ by the uncorrelated and scaled variables $\tilde{U}_j$ (which correspond to a linear combination of the $A_j$ variables).

Note that the residualized inputs computed in the original and parameterized models are exactly the same since,
\begin{align*}
\bfX_r &= \bfX - \bfA \, \bfOmega_{XA}^T \\
&= \bfX - \bfA \, (\bfGamma_{XA} + \bfGamma_{XY} \, \bfGamma_{YA})^T \\
&= \bfX - \bfA \, \bfGamma_{XA}^T - \bfA \bfGamma_{YA}^T \, \bfGamma_{XY}^T \\
&= \bfX - \tilde{\bfU} \, \bfD \, \tilde{\bfV}^T \, \bfGamma_{XA}^T - \tilde{\bfU} \, \bfD \, \tilde{\bfV}^T \bfGamma_{YA}^T \, \bfGamma_{XY}^T \\
&= \bfX - \tilde{\bfU} \, \bfGamma_{X\tilde{U}}^T - \tilde{\bfU} \bfGamma_{Y\tilde{U}}^T \, \bfGamma_{XY}^T \\
&= \bfX - \tilde{\bfU} \, (\bfGamma_{X\tilde{U}} + \bfGamma_{XY} \, \bfGamma_{Y\tilde{U}})^T \\
&= \bfX - \tilde{\bfU} \, \bfOmega_{X\tilde{U}}^T~,
\end{align*}
Similarly, the counterfactual inputs are the same since $\bfX_c = \bfY \, \bfGamma_{XY}^T + \bfW_X$ and,
\begin{align*}
\bfW_X &= \bfX - \bfY \, \bfGamma_{XY}^T - \bfA \, \bfGamma_{XA}^T \\
&= \bfX - \bfY \, \bfGamma_{XY}^T - \tilde{\bfU} \, \bfD \, \tilde{\bfV}^T \, \, \bfGamma_{XA}^T \\
&= \bfX - \bfY \, \bfGamma_{XY}^T - \tilde{\bfU} \, \bfGamma_{X\tilde{U}}^T~.
\end{align*}

Hence, by working with the reparameterized regression models,
\begin{align*}
\bfmX &= \bfGamma_{X\tilde{U}} \, \tilde{\bfmU}  + \bfGamma_{XY} \, Y + \bfmW_X~, \\
Y &= \bfGamma_{Y\tilde{U}} \, \tilde{\bfmU} + W_Y~,
\end{align*}
where $\tilde{\bfmU}$ represents the new uncorrelated confounder variables, we have from Theorem 1 that $Cov(\bfmX_r, Y) = \bfGamma_{XY} (1 - \bfGamma_{Y\tilde{U}} \, Cov(\tilde{\bfmU}) \, \bfGamma_{Y\tilde{U}}^T) = \bfGamma_{XY} (1 - \bfGamma_{Y\tilde{U}} \, \bfGamma_{Y\tilde{U}}^T)$, so that the result follows from the proof for the uncorrelated confounder case presented above.
\end{proof}

\section{Expected mean squared error analytical comparisons}

Here, we present an analytical comparison of the expected mean squared errors for the causality-aware approach, $E[MSE_c]$ against the residualization approach, $E[MSE_r]$. We show that under the conditions of Theorem 1, and assuming there is no dataset shift between the training and test sets, we have that $E[MSE_c] \leq E[MSE_r]$. We show this result for models with up to 2 features (as the algebra becomes too heavy for higher dimensions).

Before we show the results, we first re-express the expected MSE as a function of variances and covariances of the outcome and input variables.


Let $\hat{Y} = \bfmX_{ts} \hat{\bfBeta}^{tr}$ represent the prediction of a linear model, where $\bfmX_{ts}$ represents the test set features, and $\hat{\bfBeta}^{tr}$ represents the regression coefficients estimated from the training set. Without loss of generality assume that the data has been centered. By definition, the expected mean squared error of the prediction is given by,
\begin{align}
E[MSE] &= E[(Y_{ts} - \hat{Y})^2] = E[Y_{ts}^2] + E[\hat{Y}^2] - 2 E[\hat{Y} Y_{ts}]~, \\
&= Var(Y_{ts}) + E[\hat{Y}^2] - 2 Cov(\hat{Y}, Y_{ts}),
\end{align}
where the second equality follows from the fact that $E[Y_{ts}] = 0$. Now, as sample size goes to infinity we have that $\hat{Y}$ converges to $\bfmX_{ts} \bfBeta^{tr} = \sum_{j=1}^{p} X_{j,ts} \beta_j^{tr}$. Furthermore, assuming the absence of dataset shift between the training and test sets we have that,
\begin{align}
E[\hat{Y}^2]  &=  E[(\sum_{j=1}^{p} X_{j,ts} \beta_j^{tr})^2] = E[(\sum_{j=1}^{p} X_{j} \beta_j)^2] \\
Cov(\hat{Y}, Y_{ts}) &= Cov(\sum_{j=1}^{p} X_{j,ts} \beta_j^{tr}, Y_{ts}) = Cov(\sum_{j=1}^{p} X_{j} \beta_j, Y),
\end{align}
since any moments of $\bfmX$ and $Y$ will be the same for the training and test data, and $\beta_j^{tr} = \beta_j^{ts} = \beta_j$, so that we can drop the $tr$ and $ts$ (superscripts) subscripts from the notation. Therefore, we have that,
\begin{align}
E[MSE] &= Var(Y) + E[(\sum_{j=1}^{p} X_{j} \beta_j)^2] - 2 Cov(\sum_{j=1}^{p} X_{j} \beta_j, Y), \nonumber \\
&= Var(Y) + \sum_{j=1}^{p} \beta_j^2 E[X_{j}^2] + 2 \sum_{j < k} \beta_j \beta_k E[X_{j} X_{k}] - 2 \sum_{j=1}^{p} \beta_j Cov(X_{j}, Y) \nonumber \\
&= Var(Y) + \sum_{j=1}^{p} \beta_j^2 Var(X_{j}) + 2 \sum_{j < k} \beta_j \beta_k Cov(X_{j}, X_{k}) - 2 \sum_{j=1}^{p} \beta_j Cov(X_{j}, Y)~.
\end{align}

\subsection{Single feature case}

In the single feature case we have that $E[MSE]$ reduces to,
\begin{equation}
E[MSE] = Var(Y) + \beta^2 Var(X) - 2 \beta Cov(X, Y)~.
\end{equation}
Observe that the above quantity can be re-expressed as,
\begin{align}
E[MSE] &= Var(Y) + \frac{Cov(X, Y)^2}{Var(X)^2} Var(X) - 2 \frac{Cov(X, Y)}{Var(X)} Cov(X, Y) \nonumber \\
&= Var(Y) - \frac{Cov(X, Y)^2}{Var(X)}~, \label{eq:mse}
\end{align}
since $\beta$ represents the (asymptotic) coefficient of the regression of $Y$ on $X$ and is given by $Cov(X, Y)/Var(X)$.

In order to simplify notation we let,
$$
\xymatrix@-1.0pc{
& *+[F-:<10pt>]{A} \ar[dl]_{\theta} \ar[dr]^{\phi} &  \\
*+[F-:<10pt>]{X} & & *+[F-:<10pt>]{Y} \ar[ll]^{\gamma} \\
}
$$
and we assume that the data has been standardized so that $Var(Y) = Var(A) = Var(X) = 1$. The structural causal model is expressed as,
\begin{align}
A &= W_A~, \\
Y &= \phi \, A + W_Y~, \\
X &= \gamma \, Y + \theta \, A + W_X~, \hspace{0.3cm} Var(W_X) = \sigma^2~,
\end{align}
and the causality-aware and residualized features are expressed as,
\begin{align}
X_c &= X - \theta \, A = \gamma \, Y + W_X~, \\
X_r &= X - (\theta + \phi \, \gamma) A = X_c - \phi \, \gamma A~.
\end{align}

Direct computations show that,
\begin{align}
Cov(X_{c}, Y) &= Cov(\gamma \, Y + W_X, Y) = \gamma Var(Y)  \nonumber \\
&= \gamma~, \label{eq:cov.Xc.Y} \\
Var(X_{c}) &= Var(\gamma \, Y + W_X) = \gamma^2 Var(Y) + Var(W_X)  \nonumber \\
&= \sigma^2 + \gamma^2~, \label{eq:var.Xc}
\end{align}
and,
\begin{align}
Cov(X_r, Y) &= Cov(X_c - \phi \, \gamma A, Y) \nonumber \\
&= Cov(X_c, Y) - \phi \, \gamma Cov(A, Y) = \gamma - \phi \, \gamma \, \phi \nonumber \\
&= \gamma \, (1 - \phi^2)~, \label{eq:cov.Xr.Y}
\end{align}
\begin{align}
Var(X_r) &= Var(X_c - \phi \, \gamma A) \nonumber \\
&= Var(X_c) + \phi^2 \, \gamma^2 Var(A) - 2 \phi \, \gamma Cov(X_c, A) \nonumber \\
&= \sigma^2 + \gamma^2 + \phi^2 \, \gamma^2 - 2 \phi \gamma \, Cov(\gamma \, Y + W_X, A) \nonumber \\
&= \sigma^2 + \gamma^2 + \phi^2 \, \gamma^2 - 2 \phi \gamma \, \gamma \, \phi \nonumber \\
&= \sigma^2 + \gamma^2 \, (1 - \phi^2)~. \label{eq:var.Xr}
\end{align}

By replacing eq. (\ref{eq:cov.Xc.Y}) and (\ref{eq:var.Xc}) on eq. (\ref{eq:mse}) we have that,
\begin{equation}
E[MSE_c] = 1 - \frac{\gamma^2}{\sigma^2 + \gamma^2}~.
\end{equation}
Similarly, replacing eq. (\ref{eq:cov.Xr.Y}) and (\ref{eq:var.Xr}) on eq. (\ref{eq:mse}), shows that,
\begin{equation}
E[MSE_r] = 1 - \frac{\gamma^2 \, (1 - \phi^2)^2}{\sigma^2 + \gamma^2 \, (1 - \phi^2)}~.
\end{equation}

Now, observe that,
\begin{equation}
\frac{\gamma^2 \, (1 - \phi^2)^2}{\sigma^2 + \gamma^2 \, (1 - \phi^2)} = \frac{\gamma^2}{\frac{\sigma^2}{(1 - \phi^2)^2} + \frac{\gamma^2}{1 - \phi^2}} \leq \frac{\gamma^2}{\sigma^2 + \gamma^2}~,
\end{equation}
since $\phi = Cor(Y, A)$ implies that $0 \leq (1 - \phi^2) \leq 1$, so that $\sigma^2/(1 - \phi^2)^2 \geq \sigma^2$ and $\gamma^2/(1 - \phi^2) \geq \gamma^2$. Therefore, it follows that,
\begin{equation}
E[MSE_r] = 1 - \frac{\gamma^2 \, (1 - \phi^2)^2}{\sigma^2 + \gamma^2 \, (1 - \phi^2)} \, \geq \, 1 - \frac{\gamma^2}{\sigma^2 + \gamma^2} = E[MSE_c]~.
\end{equation}

\subsection{Two features case}

In the two features case we have that,
\begin{align}
E[MSE] &= Var(Y) + \beta_1^2 Var(X_1) + \beta_2^2 Var(X_2) + 2 \beta_1 \, \beta_2 Cov(X_1, X_2) - \nonumber \\
& -2 \beta_1 Cov(X_1, Y) - 2 \beta_1 Cov(X_2, Y)~.
\end{align}

Now, observe that,
\begin{align}
\beta_1 &= \frac{Cov(X_1, Y) Var(X_2) - Cov(X_2, Y) Cov(X_1, X_2)}{Var(X_1) Var(X_2) - Cov(X_1, X_2)^2}~, \\
\beta_2 &= \frac{Cov(X_2, Y) Var(X_1) - Cov(X_1, Y) Cov(X_1, X_2)}{Var(X_1) Var(X_2) - Cov(X_1, X_2)^2}~,
\end{align}
and we have after some algebraic manipulations that,
\begin{align}
E&[MSE] = Var(Y) + \nonumber \\
&+\frac{2 Cov(X_1, Y) Cov(X_2, Y) Cov(X_1, X_2) - Var(X_1) Cov(X_2, Y)^2 - Var(X_2) Cov(X_1, Y)^2}{Var(X_1) Var(X_2) - Cov(X_1, X_2)^2}~ \label{eq:mse.2.var}.
\end{align}

Again, we assume that the data has been standardized and follows the model,
$$
\xymatrix@-1.2pc{
&&& *+[F-:<10pt>]{A} \ar[dll]_{\theta_1} \ar[dddll]|-{\theta_2} \ar[ddr]^{\phi} &  \\
W_{X_1} \ar@/_1.0pc/@{<->}[dd] \ar[r] & *+[F-:<10pt>]{X_1} & &  \\
&&& & *+[F-:<10pt>]{Y} \ar@/_0.5pc/[lllu]|-{\gamma_1} \ar[llld]^{\gamma_2} \\
W_{X_2} \ar[r] & *+[F-:<10pt>]{X_2} & & & \\
}
$$
described by the equations,
\begin{align}
A &= W_A~, \\
Y &= \phi \, A + W_Y~, \\
X_1 &= \gamma_1 \, Y + \theta_1 \, A + W_{X_1}~, \\
X_2 &= \gamma_2 \, Y + \theta_2 \, A + W_{X_2}~,
\end{align}
where we assume that the correlated error terms have mean and covariance given by,
\begin{equation}
E
\begin{pmatrix}
W_{X_1} \\
W_{X_2} \\
\end{pmatrix}
=
\begin{pmatrix}
0 \\
0 \\
\end{pmatrix}~, \hspace{0.2cm}
\bfSigma_{\bfW} =
\begin{pmatrix}
Var(W_{X_1}) & Cov(W_{X_1}, W_{X_2}) \\
Cov(W_{X_1}, W_{X_2}) & Var(W_{X_2}) \\
\end{pmatrix}
=
\begin{pmatrix}
\sigma_{11} & \sigma_{12} \\
\sigma_{12} & \sigma_{22} \\
\end{pmatrix}~,
\end{equation}
and where the causality-aware and the residualized features are given by,
\begin{align}
X_{j,c} &= X_j - \theta_j \, A = \gamma_j \, Y + W_{X_j}~, \\
X_{j,r} &= X_j - (\theta_j + \phi \, \gamma_j) A = X_{j,c} - \phi \, \gamma_j A~.
\end{align}
for $j = 1, 2$. Direct calculation shows that,
\begin{align}
Var&(X_{j,c}) = \sigma_{jj} + \gamma_j^2~, \label{eq:stats.c.1} \\
Cov&(X_{j,c}, Y) = \gamma_j~, \\
Cov&(X_{1,c}, X_{2,c}) = \sigma_{12} + \gamma_1 \, \gamma_2~,  \label{eq:stats.c.3}
\end{align}
and that,
\begin{align}
Var&(X_{j,r}) = \sigma_{jj} + \gamma_j^2 \, (1 - \phi^2)~, \label{eq:stats.r.1} \\
Cov&(X_{j,r}, Y) = \gamma_j \, (1 - \phi^2)~, \\
Cov&(X_{1,r}, X_{2,r}) = \sigma_{12} + \gamma_1 \, \gamma_2 \, (1 - \phi^2)~.  \label{eq:stats.r.3}
\end{align}

Direct replacement of equations (\ref{eq:stats.c.1})-(\ref{eq:stats.c.3}) on equation (\ref{eq:mse.2.var}) shows that,
\begin{equation}
E[MSE_c] = 1 - \frac{\sigma_{11} \, \gamma_2^2 + \sigma_{22} \, \gamma_1^2 - 2 \, \gamma_1 \, \gamma_2 \, \sigma_{12}}{\sigma_{11} \, \sigma_{22} - \sigma_{12}^2 + (\sigma_{11} \, \gamma_2^2 + \sigma_{22} \, \gamma_1^2 - 2 \, \gamma_1 \, \gamma_2 \, \sigma_{12})}~,
\end{equation}
whereas replacement of equations (\ref{eq:stats.r.1})-(\ref{eq:stats.r.3}) on equation (\ref{eq:mse.2.var}) shows that,
\begin{equation}
E[MSE_r] = 1 - \frac{(\sigma_{11} \, \gamma_2^2 + \sigma_{22} \, \gamma_1^2 - 2 \, \gamma_1 \, \gamma_2 \, \sigma_{12})(1 - \phi^2)^2}{\sigma_{11} \, \sigma_{22} - \sigma_{12}^2 + (\sigma_{11} \, \gamma_2^2 + \sigma_{22} \, \gamma_1^2 - 2 \, \gamma_1 \, \gamma_2 \, \sigma_{12}) (1 - \phi^2)}~.
\end{equation}

Now, observe that,
\begin{align}
&\frac{(\sigma_{11} \, \gamma_2^2 + \sigma_{22} \, \gamma_1^2 - 2 \, \gamma_1 \, \gamma_2 \, \sigma_{12})(1 - \phi^2)^2}{\sigma_{11} \, \sigma_{22} - \sigma_{12}^2 + (\sigma_{11} \, \gamma_2^2 + \sigma_{22} \, \gamma_1^2 - 2 \, \gamma_1 \, \gamma_2 \, \sigma_{12}) (1 - \phi^2)} \\
&= \frac{(\sigma_{11} \, \gamma_2^2 + \sigma_{22} \, \gamma_1^2 - 2 \, \gamma_1 \, \gamma_2 \, \sigma_{12})}{\frac{\sigma_{11} \, \sigma_{22} - \sigma_{12}^2}{(1 - \phi^2)^2} + \frac{\sigma_{11} \, \gamma_2^2 + \sigma_{22} \, \gamma_1^2 - 2 \, \gamma_1 \, \gamma_2 \, \sigma_{12}}{1 - \phi^2}} \\
&\leq \frac{\sigma_{11} \, \gamma_2^2 + \sigma_{22} \, \gamma_1^2 - 2 \, \gamma_1 \, \gamma_2 \, \sigma_{12}}{\sigma_{11} \, \sigma_{22} - \sigma_{12}^2 + (\sigma_{11} \, \gamma_2^2 + \sigma_{22} \, \gamma_1^2 - 2 \, \gamma_1 \, \gamma_2 \, \sigma_{12})}
\end{align}
since $0 \leq (1 - \phi)^2 \leq 1$, $(\sigma_{11} \, \sigma_{22} - \sigma_{12}^2) > 0$ (as it corresponds to the determinant of the positive definite covariance matrix $\bfSigma_{\bfW}$), and $(\sigma_{11} \, \gamma_2^2 + \sigma_{22} \, \gamma_1^2 - 2 \, \gamma_1 \, \gamma_2 \, \sigma_{12}) > 0$ (as it corresponds to the variance of a random variable defined as $\gamma_1 W_{X_1} - \gamma_2 \, W_{X_2}$).

Therefore, it follows that, $E[MSE_r] \geq E[MSE_c]$.

\section{Synthetic data experiment details}

\subsection{Regression task illustrations details}

Here, we present the details of the regression task experiments presented in the main text. We ran two experiments: the first based on correctly specified models, and the second based on mispecified models. In both experiments, we simulated correlated error terms, $\bfmW_A$ and $\bfmW_X$, from bivariate normal distributions,
\begin{align}
\bfmW_A^o &\sim \mbox{N}_2\left(
\begin{pmatrix}
0 \\
0 \\
\end{pmatrix}\, , \,
\begin{pmatrix}
1 & \rho_A \\
\rho_A & 1 \\
\end{pmatrix} \right)~, \\
\bfmW_X^o &\sim \mbox{N}_2\left(
\begin{pmatrix}
0 \\
0 \\
\end{pmatrix}\, , \,
\begin{pmatrix}
1 & \rho_X \\
\rho_X & 1 \\
\end{pmatrix} \right)~.
\label{eq:correlated.errors}
\end{align}
In the first experiment, the confounders, output and input variables were generated according to,
\begin{align}
A_j^o &= \mu_{A_j} + W_{A_j}^o~,  \label{eq:correct.model.1} \\
Y^o &= \mu_Y + \beta_{YA_1} \, {A_1^o} + \beta_{YA_2} \, {A_2^o} + W_Y^o~,  \label{eq:correct.model.2} \\
X_j^o &= \mu_{X_j} + \beta_{{X_j}{A_1}} \, {A_1^o} + \beta_{{X_j}{A_2}} \, {A_2^o} + \beta_{{X_j}{Y}} \, {Y^o} + W_{X_j}^o~, \label{eq:correct.model.3}
\end{align}
while in the second (i.e., the mispecified case) they were generated according to,
\begin{align}
A_j^o &= \mu_{A_j} + W_{A_j}^o~, \label{eq:mispecified.model.1} \\
Y^o &= \mu_Y + \beta_{YA_1} \, {A_1^o}^2 + \beta_{YA_2} \, {A_2^o}^2 + W_Y^o~, \label{eq:mispecified.model.2} \\
X_j^o &= \mu_{X_j} + \beta_{{X_j}{A_1}} \, {A_1^o}^2 + \beta_{{X_j}{A_2}} \, {A_2^o}^2 + \beta_{{X_j}{Y}} \, {Y^o}^2 + W_{X_j}^o~,\label{eq:mispecified.model.3}
\end{align}
where $j = 1, 2$ and $W_Y^o \sim \mbox{N}(0 \, , \, \sigma^2_Y)$.

For each experiment, we performed 1000 simulations as follows:
\begin{enumerate}
\item Randomly sampled the simulation parameters from uniform distributions, with the intercept parameters $\mu_{A_1}$, $\mu_{A_2}$, $\mu_{Y}$, and $\mu_{X_j}$ drawn from a $\mbox{U}(-3, 3)$ distribution; the regression coefficients, $\beta_{{Y}{A_1}}$, $\beta_{{Y}{A_2}}$, $\beta_{{X_j}{Y}}$, $\beta_{{X_j}{A_1}}$, and $\beta_{{X_j}{A_2}}$, and the error variance, $\sigma_{Y}^2$, drawn from a $\mbox{U}(1, 3)$ distribution; and the correlations $\rho_A$ and $\rho_X$ from a $\mbox{U}(-0.8, 0.8)$ distribution.
\item Simulated the original data $\bfmA^o$, $Y^o$, and $\bfmX^o$ using the simulation parameters sampled in step 1, according to the models in equations (\ref{eq:correct.model.1})-(\ref{eq:correct.model.3}) in the first experiment, and equations (\ref{eq:mispecified.model.1})-(\ref{eq:mispecified.model.3}) in the second, and then standardized the data to obtain $\bfmA$, $Y$, and $\bfmX$. (Each simulated dataset was composed of 10,000 training and 10,000 test examples.)
\item For each simulated feature, $X_j$, we generated the respective residualized and causality-aware features as described in Sections 2.4 and 2.5 in the main text.
\item For each residual and causality-aware input we computed $Cov(\hat{X}_{r,j}, Y)$ and $Cov(\hat{X}_{c,j}, Y)$.
\item Finally, we trained linear regression models using the residualized and the causality-aware features, and computed the respective test set mean squared errors, $\mbox{MSE}_r$ and $\mbox{MSE}_c$.
\end{enumerate}

\subsection{Classification task illustrations details}

Here, we present the details of classification task experiments presented in the main text. As before, we ran two experiments: the first based on correctly specified models, and the second based on mispecified models. In both experiments, we simulated correlated error terms, $\bfmW_X$, from the bivariate normal distribution,
\begin{equation}
\bfmW_X \sim \mbox{N}_2\left(
\begin{pmatrix}
0 \\
0 \\
\end{pmatrix}\, , \,
\begin{pmatrix}
1 & \rho_X \\
\rho_X & 1 \\
\end{pmatrix} \right)~.
\label{eq:correlated.errors}
\end{equation}
and simulate correlated binary confounder variables from a bivariate Bernoulli distribution~\cite{dai2013}, with probability density function given by,
\begin{equation}
p(A_1^o, A_2^o) \, = \, p_{11}^{a_1 \, a_2} \, p_{10}^{a_1 \, (1 - a_2)} \,  p_{01}^{(1 - a_1) \, a_2} \, p_{00}^{(1 - a_1) \, (1 - a_2)}~,
\end{equation}
where $p_{ij} = P(A_1^o = i, A_2^o = j)$ and $p_{11} + p_{10} + p_{01} + p_{00} = 1$, and the covariance between $A_1^o$ and $A_2^o$ is given by~\cite{dai2013},
\begin{equation}
Cov(A_1^o, A_2^o) \, = \, p_{11} \, p_{00} \, - \, p_{01} \, p_{10}~.
\end{equation}
The binary output data $Y^o$ was generated according to a logistic regression model where,
\begin{align}
P&(Y^o = 1 \mid A_1^o = a_1, A_2^o = a_2) = 1/(1 + \exp{\{ -(\mu_Y + \beta_{YA_1} \, a_1 +  \beta_{YA_2} \, a_2)\}})~.
\end{align}
For the correctly specified experiments, the features $X_j^0$, $j = 1, 2$, where generated according to,
\begin{equation}
X_j^o = \mu_{X_j} + \beta_{{X_j}{A_1}} \, {A_1^o} + \beta_{{X_j}{A_2}} \, {A_2^o} + \beta_{{X_j}{Y}} \, {Y^o} + W_{X_j}^o~. \label{eq:correct.model.class.3}
\end{equation}
For the incorrectly specified experiments, on the other hand, the features were generated as,
\begin{align}
X_j^o = \mu_{X_j} &+ \beta_{{X_j}{Y}{A_1}} \, {Y^o} \, A_1^o + \beta_{{X_j}{Y}{A_2}} \, {Y^o} \, A_2^o + W_{X_j}^o~, \label{eq:incorrect.model.class.3}
\end{align}
containing only interaction terms between $A_k^o$ and $Y^o$.

Each experiment was based on 1000 replications with simulation parameters $\mu_{Y}$ and $\mu_{X_j}$ drawn from a $\mbox{U}(-3, 3)$ distribution; $\beta_{{Y}{A_1}}$, $\beta_{{Y}{A_2}}$, $\beta_{{X_j}{Y}}$, $\beta_{{X_j}{A_1}}$, $\beta_{{X_j}{A_2}}$, $\beta_{{X_j}{Y}{A_1}}$, and $\beta_{{X_j}{Y}{A_2}}$ drawn from a $\mbox{U}(1, 3)$ distribution; $\rho_X \sim \mbox{U}(-0.8, 0.8)$; and $p_{11}$, $p_{10}$, $p_{01}$, and $p_{00}$ sampled by randomly splitting the interval $(0, 1)$ into 4 pieces. For each simulated feature, we generated the respective residualized and causality-aware features and trained logistic regression classifiers using the processed features, and computed the respective test set classification accuracies, $\mbox{ACC}_r$ and $\mbox{ACC}_c$.

\section{Evaluating the effectiveness of the confounding adjustment}

Since any inferences draw from the deterministic counterfactual approach employed in the causality-aware adjustment rely on modeling choices, it is essential to evaluate if the proposed adjustment approach is working as expected. Following the approach proposed by~\cite{chaibubneto2019}, we describe how to use conditional independence patterns to evaluate if predictions generated with the causality-aware approach are really free from the observed confounders influence. The key idea is to represent the data generation process of the observed data together with the data generation process giving rise to the predictions as a causal diagram, and compare the conditional independence relations predicted by d-separation against the conditional independence relations observed in the data. Throughout this section we let $\hat{Y}^{ts}$ represent either the predicted outputs in the regression tasks or the predicted probability of the positive class in the classification tasks. Before we present the approach, we first provide additional background that will be needed for these analyses.

\subsection{Additional background}

In a DAG, a \textit{path} corresponds to any unbroken, nonintersecting sequence of edges in the DAG, which may go along or against the direction of the arrows. A path is \textit{d-separated} or \textit{blocked}~\cite{pearl2009} by a set of nodes $\bfmZ$ if and only if: (i) the path contains a chain $V_j \rightarrow V_m \rightarrow V_k$ or a fork $V_j \leftarrow V_m \rightarrow V_k$ such that the middle node $V_m$ is in $\bfmZ$; or (ii) the path contains a collider $V_j \rightarrow V_m \leftarrow V_k$ such that $V_m$ is not in $\bfmZ$ and no descendant of $V_m$ is in $\bfmZ$. Otherwise, the path is \textit{d-connected} or \textit{open}. The joint distribution over a set of random variables is \textit{faithful}~\cite{spirtes2000,pearl2009} to a causal diagram if no conditional independence relations, other than the ones implied by the d-separation criterion are present. The notation $V_1 \nci V_2$ and $V_1 \ci V_2$ represents marginal statistical dependence and independence, respectively. Conditional dependencies and independencies of $V_1$ and $V_2$ given $V_3$ are represented using the notation $V_1 \nci V_2 \mid V_3$ and $V_1 \ci V_2 \mid V_3$, respectively.

\subsection{Conditional independence patterns for an anticausal predictive task}

Figure \ref{fig:ci.anticausal.dag}a presents the causal graph underlying an anticausal prediction task. In this diagram, the black arrows represent the data generation process underlying the observed data, $\bfmX$, $\bfmA$, and $Y$, while the red arrows represent the data generation process giving rise to the test set predictions, $\hat{Y}^{ts}$. Note that $\bfmX^{tr}$, $Y^{tr}$, and $\bfmX^{ts}$, are parents of $\hat{Y}^{ts}$ since the prediction is a function of both the training data used to train a learner, $\{\bfmX^{tr}, Y^{tr}\}$, and of the test set inputs used to generated the predictions. Figure \ref{fig:ci.anticausal.dag}b shows the simplified graph (omitting the $\bfmX^{ts}$ node) for the test set data, where the full paths $\bfmA^{ts} \rightarrow \bfmX^{ts} \rightarrow \hat{Y}^{ts}$ and $\bfmA^{ts} \rightarrow Y^{ts} \rightarrow \bfmX^{ts} \rightarrow \hat{Y}^{ts}$ have been replaced by the simplified paths $\bfmA^{ts} \rightarrow \hat{Y}^{ts}$ and $\bfmA^{ts} \rightarrow Y^{ts} \rightarrow \hat{Y}^{ts}$. Note that in this diagram, $\bfmA^{ts}$ represents a confounder of the prediction $\hat{Y}^{ts}$, since there is an open path from $\bfmA^{ts}$ to $\hat{Y}^{ts}$ that does not go through $Y^{ts}$ (namely, $\bfmA^{ts} \rightarrow \hat{Y}^{ts}$), as well as, an open path from $\bfmA^{ts}$ to $Y^{ts}$ that does not go through $\hat{Y}^{ts}$ (namely, $\bfmA^{ts} \rightarrow Y^{ts} \rightarrow \hat{Y}^{ts}$).
\begin{figure}[!h]
$$
\xymatrix@-1.5pc{
& *+[F-:<10pt>]{\bfmA^{tr}} \ar[dl] \ar[dr] & & (a) & & *+[F-:<10pt>]{\bfmA^{ts}} \ar[dl] \ar[dr] & & & (b) & *+[F-:<10pt>]{\bfmA^{ts}} \ar[dl] \ar[dr] & \\
*+[F-:<10pt>]{\bfmX^{tr}} \ar@[red][drrr] & & *+[F-:<10pt>]{Y^{tr}} \ar[ll] \ar@[red][dr] & & *+[F-:<10pt>]{\bfmX^{ts}} \ar@[red][dl] && *+[F-:<10pt>]{Y^{ts}} \ar[ll] && *+[F-:<10pt>]{\hat{Y}^{ts}} & & *+[F-:<10pt>]{Y^{ts}} \ar[ll]  \\
&&& *+[F-:<10pt>]{\hat{Y}^{ts}} &&& \\
(c) & *+[F-:<10pt>]{\bfmA^{tr}} \ar[dr] & & & & *+[F-:<10pt>]{\bfmA^{ts}} \ar[dr] & & & (d) & *+[F-:<10pt>]{\bfmA^{ts}} \ar[dr] & \\
*+[F-:<10pt>]{\bfmX^{tr}_c} \ar@[red][drrr] & & *+[F-:<10pt>]{Y^{tr}} \ar[ll] \ar@[red][dr] & & *+[F-:<10pt>]{\bfmX^{ts}_c} \ar@[red][dl] && *+[F-:<10pt>]{Y^{ts}} \ar[ll] && *+[F-:<10pt>]{\hat{Y}_c^{ts}} & & *+[F-:<10pt>]{Y^{ts}} \ar[ll]  \\
&&& *+[F-:<10pt>]{\hat{Y}_c^{ts}} &&& \\
}
$$
  \caption{Panel a shows the full causal diagram underlying an anticausal prediction task. Panel b shows the simplified graph omitting the $\bfmX^{ts}$ node. Panels c and d show the full and simplified diagrams for the prediction task based on the causality-aware approach.}
  \label{fig:ci.anticausal.dag}
\end{figure}
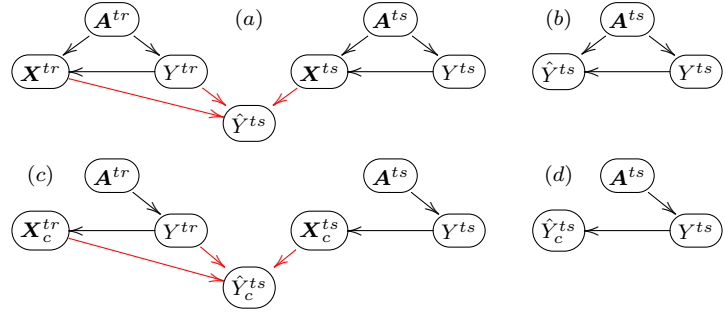

Figure \ref{fig:ci.anticausal.dag}c, on the other hand, shows the causal graph for a learner trained and evaluated with the causality-aware inputs, $\bfmX_c$, while panel d shows the respective simplified graph for the test set. (Note that in panel d $\bfmA$ is no longer a confounder of the predictions $\hat{Y}_c^{ts}$, since the only path connecting $\bfmA^{ts}$ to $\hat{Y}_c^{ts}$ goes through $Y^{ts}$.)

Note that, if the causality-aware adjustment is effective (and faithfulness holds) then, from the application of d-separation~\cite{pearl2009} to the simplified causal graph in Figure \ref{fig:ci.anticausal.dag}d, we would expect to see the following pattern of marginal and conditional (in)dependence relations in the data:
\begin{align*}
&{\hat{Y}_c^{ts}} \nci Y^{ts}~, \\
&{\hat{Y}_c^{ts}} \nci \bfmA^{ts}~, \\
&\bfmA^{ts} \nci Y^{ts}~, \\
&{\hat{Y}_c^{ts}} \nci Y^{ts} \mid \bfmA^{ts}~, \\
&{\hat{Y}_c^{ts}} \ci \bfmA^{ts} \mid Y^{ts}~, \\
&\bfmA^{ts} \nci Y^{ts} \mid \hat{Y}_c^{ts}~,
\end{align*}
where the only conditional independence is given by ${\hat{Y}_c^{ts}} \ci \bfmA^{ts} \mid Y^{ts}$ (where ${\hat{Y}_c^{ts}}$ and $\bfmA^{ts}$ are conditionally independent given $Y^{ts}$, since conditioning on $Y^{ts}$ blocks the path $\bfmA^{ts} \rightarrow Y^{ts} \rightarrow \hat{Y}_c^{ts}$ in Figure \ref{fig:ci.anticausal.dag}d). On the other hand, if the adjustment has failed, we would still expect to see the conditional association ${\hat{Y}_c^{ts}} \nci \bfmA^{ts} \mid Y^{ts}$ in the data.

Figure \ref{fig:colored.MNIST.plus.synthetic} illustrates the application of these sanity checks for the regression task synthetic data experiments for both the correctly specified and mispecified cases with respect to the $A_2$ confounder (the results for the $A_1$ confounder were presented in Figure 5 in the main text). Panel a reports the results for the correctly specified experiment. Note how the distribution of the $\hat{cor}(\hat{Y}_c^{ts}, A^{ts} \mid Y^{ts})$ values were tightly centered around 0, while the distributions for the other marginal and partial correlations were centered above 0. This illustrates that the conditional (in)dependence patterns were consistent with the model in Figure \ref{fig:ci.anticausal.dag}d suggesting that the causality-aware approach effectively removed the direct influence of the color confounder from the predictions $\hat{Y}^{ts}$. Panel b reports the results for the mispecified model experiments. In this case, the results are no longer consistent with Figure \ref{fig:ci.anticausal.dag}d (note the very large spread of the distribution of $\hat{cor}(\hat{Y}_c^{ts}, A^{ts} \mid Y^{ts})$), but rather are consistent with the confounded prediction task in Figure \ref{fig:ci.anticausal.dag}b. These results clearly show that the mispecified regression models failed to remove the direct influence of $A_2$ from the predictions $\hat{Y}_c^{ts}$ in a fair amount of the simulated datasets, and point to the need for more flexible models. In the next section we present and extension based on additive-models.

\begin{figure}[!h]
\centerline{\includegraphics[width=4in]{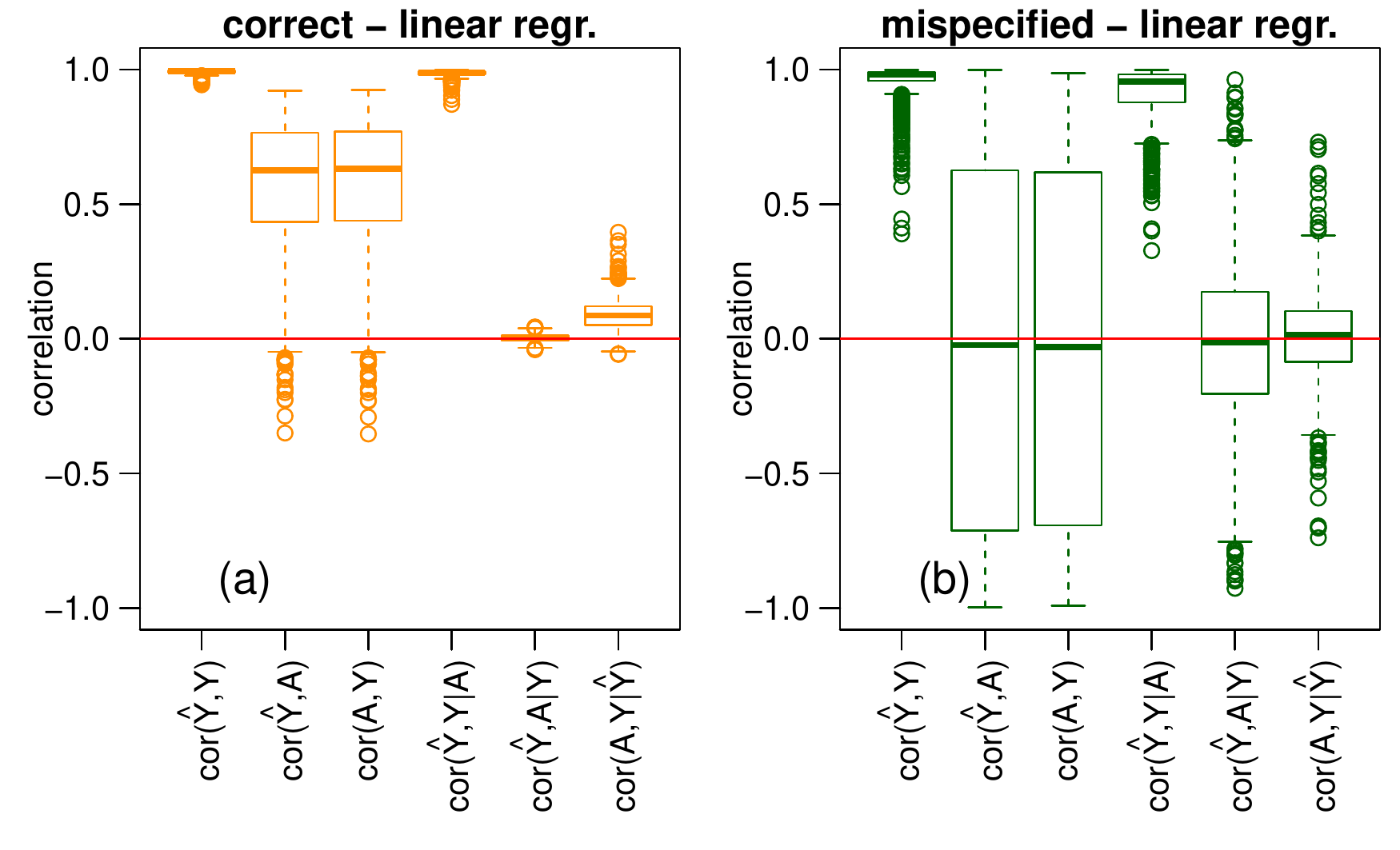}}
\vskip -0.1 in
\caption{Conditional (in)dependence checks for the causality-aware approach.}
\label{fig:colored.MNIST.plus.synthetic}
\end{figure}

\section{Extension to additive-models}

In order to add greater flexibility to our modeling approach (and avoid the often fairly restrictive linearity assumption) we replace the linear models by the more flexible additive-models~\cite{hastie1990}, which are able to capture non-linear relationships between the variables. We denote these extensions as ``additive-model residualization" and ``additive-model causality-aware" approaches.

For the additive-model residualization approach we model each feature $X_j$ using the additive-model,
\begin{equation}
X_j = \mu_{X_j} + \sum_{i=1}^k f_{{X_j}{A_i}}(A_i) + U_{X_j}~,
\end{equation}
and compute the residualized features as,
\begin{equation}
\hat{X}_{r,j} = X_j - \hat{\mu}_{X_j} - \sum_{i=1}^k \hat{f}_{{X_j}{A_i}}(A_i)~,
\end{equation}
where $f_{{V_1}{V_2}}$ represents a scatterplot smoother capable of capturing non-linear relations between variables $V_1$ and $V_2$, and $\hat{f}_{{V_1}{V_2}}$ represents the respective estimated smoother.

For the additive-model causality-aware approach, we fit the following additive model to the training data,
\begin{equation}
X_j^{tr} = \mu_{X_j}^{tr} + f^{tr}_{{X_j}Y}(Y^{tr}) + \sum_{i=1}^k f^{tr}_{{X_j}{A_i}}(A_i^{tr}) + U_{X_j}^{tr}~,
\end{equation}
and compute the training set causality-aware features as,
\begin{equation}
\hat{X}_{c,j}^{tr} = \hat{\mu}_{X_j}^{tr} + \hat{f}^{tr}_{{X_j}Y}(Y^{tr}) + \hat{U}_{X_j}^{tr}~,
\end{equation}
where,
\begin{equation}
\hat{U}_{X_j}^{tr} = X_j^{tr} - \hat{\mu}_{X_j}^{tr} - \hat{f}^{tr}_{{X_j}Y}(Y^{tr}) - \sum_{i=1}^k \hat{f}^{tr}_{{X_j}{A_i}}(A_i^{tr})~,
\end{equation}
while the causality-aware test set features are computed as,
\begin{equation}
\hat{X}_{c,j}^{ts} = X_j^{ts} - \sum_{i=1}^k \hat{f}^{tr}_{{X_j}{A_i}}(A_i^{ts})~,
\end{equation}
where $\hat{f}^{tr}_{{X_j}{A_i}}(A_i^{ts})$ represents the evaluation of the test set confounder data, $A_i^{ts}$, using the respective scatterplot smoother estimated in the training set.

Figure \ref{fig:ci.tests.lin.regr.vs.add.mod} reports a comparison of the conditional (in)dependence patterns of the linear regression causality-aware approach (panels a and b) against the additive-model causality-aware approach (panels c and d) for the synthetic data experiments.
\begin{figure}[!h]
\centerline{\includegraphics[width=\linewidth]{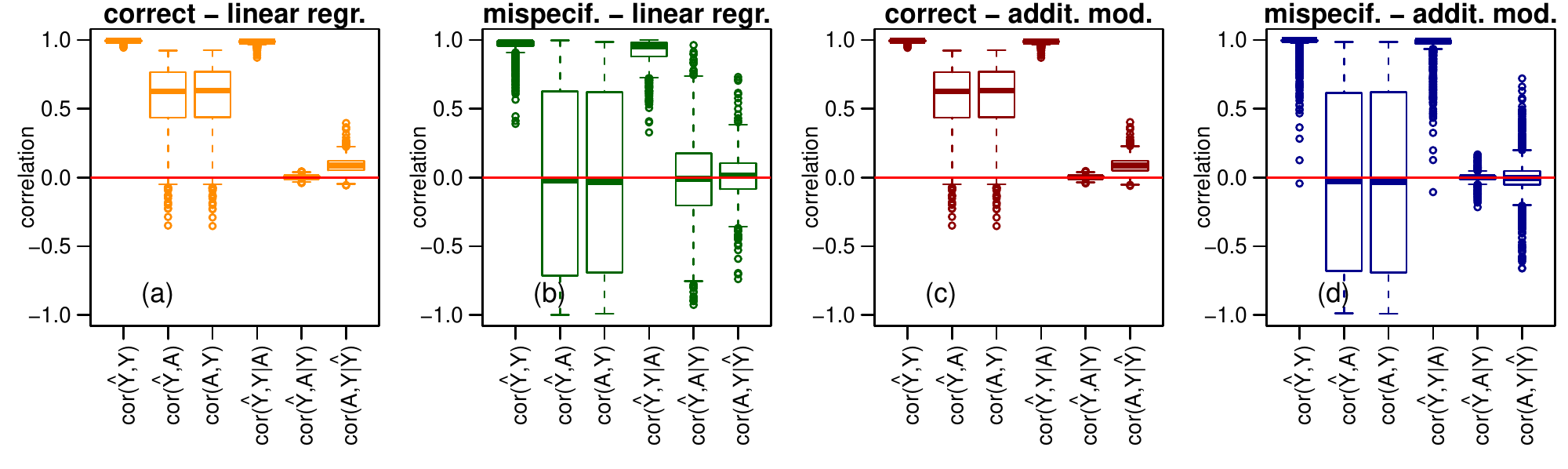}}
\caption{Comparison of conditional (in)dependence checks for the linear regression causality-aware approach vs the additive-model causality-aware approach. Results based on the same 1000 simulated datasets generated as described in the main text, with training and test sets of size 10,000.}
\label{fig:ci.tests.lin.regr.vs.add.mod}
\end{figure}

Note that for the experiments based on correctly specified model, the results based on the additive-model adjustment were quite similar to the results based on the linear regression adjustment (compare panel a vs panel c). This is expected, since the flexible additive-models are able to adapt to the data, so that when the data truly follows a linear model, the additive model will ``mimic" a linear model fit. For the experiments based on mispecified models, on the other hand, we see that the additive-model based causality-aware adjustment was much more effective in removing confounding than the linear model based adjustment (note how $\hat{cor}(\hat{Y}_c^{ts}, A^{ts} \mid Y^{ts})$ distribution is much more tightly centered around 0 in panel d than in panel b).

Figure \ref{fig:scatterplots.lin.regr.vs.add.mod} compares the covariances and mean squared errors from learners trained with the additive-model residualization inputs versus learners trained with the additive-model causality-aware inputs. This empirical comparison shows that the causality-aware approach still outperforms the residualization adjustment when the linear-regression models are replaced by the more flexible additive-models.
\begin{figure}[!h]
\centerline{\includegraphics[width=4in]{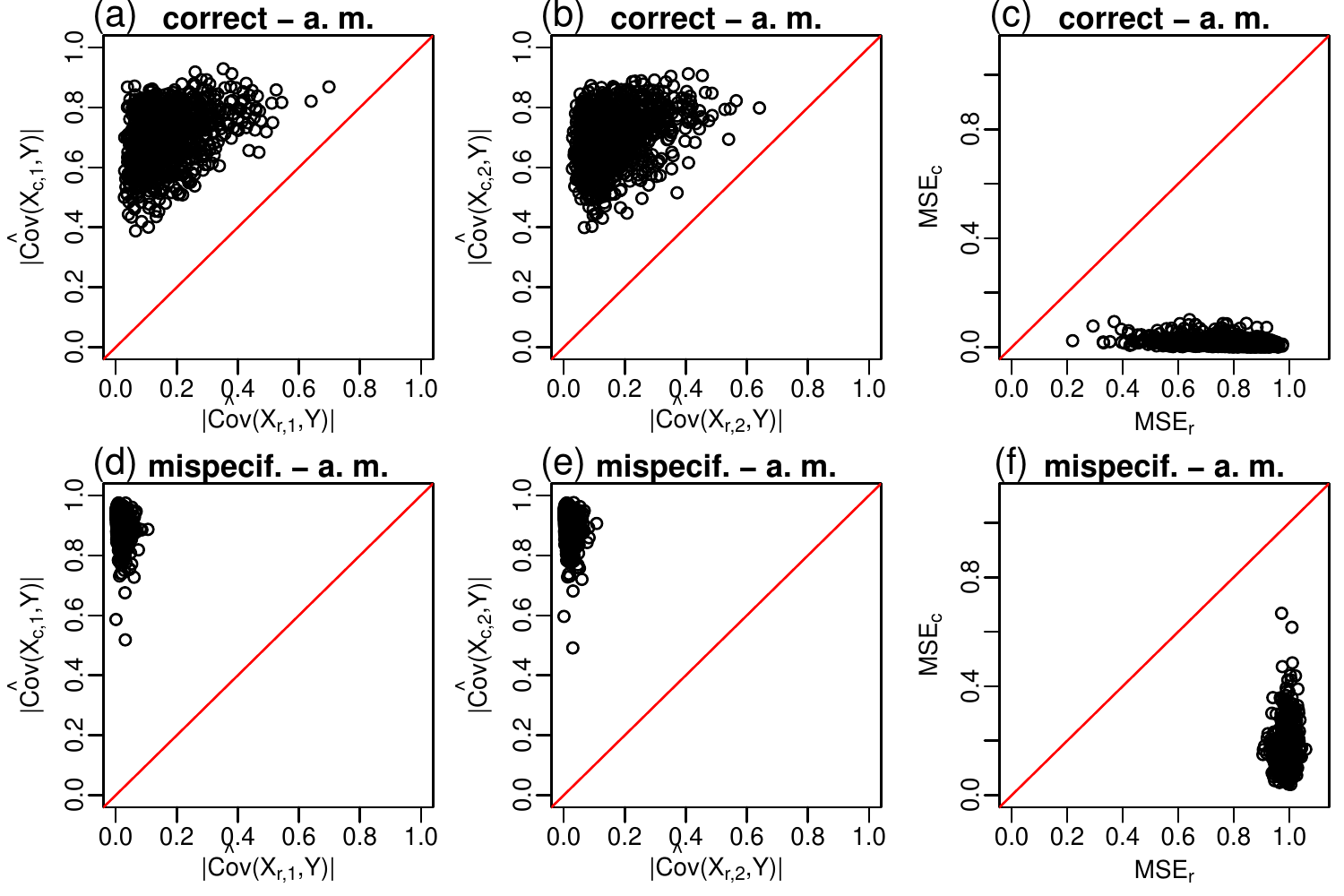}}
\caption{Comparison of the regression task experiments based on additive-model residualization and additive-model causality-aware adjustments. Results based on the same 1000 simulated datasets generated as described in the main text, with training and test sets of size 10,000.}
\label{fig:scatterplots.lin.regr.vs.add.mod}
\end{figure}

\noindent \textbf{Remarks.} While the causality-aware approach based on additive-models was able to effectively combat confounding in these synthetic data experiments, additive models still make the important assumption that the data generation process is additive (what was true in our synthetic data experiments, but which might still be violated in real data settings). In any case, a modeler can always apply the conditional (in)dependence pattern evaluations described above to check if the adjustment is really working or not, and then decide if even more flexible models are still needed.

Finally, we would like to point out that we have focused the application of these conditional (in)dependence evaluations only on data from the causality-aware approach, but not on the residualization approach, because the residualization approach generates data that is unfaithful to any causal diagram describing an anticausal prediction task (that is, the conditional (in)dependence patterns generated by the residualization approach are not consistent with any causal diagrams where $Y^{ts}$ has a causal influence on $\hat{Y}^{ts}$). In the next subsection, we describe this point in more detail.

\subsection{A note on the unfaifulness of the residualization approach in anticausal prediction tasks}

Here, we describe why the residualization approach generates data that is unfaithful to anticausal prediction tasks using a toy example. Consider again the anticausal prediction task,
$$
\xymatrix@-1.0pc{
& *+[F-:<10pt>]{A} \ar[dl] \ar[dr] &  \\
*+[F-:<10pt>]{X} & & *+[F-:<10pt>]{Y} \ar[ll] \\
}
$$
where, for simplicity, we consider a single confounder $A$ and a single feature $X$, and we assume that the true data generation process is given by the standardized linear structural models
\begin{align}
A &= U_A~, \label{eq:str.eq.C} \\
Y &= \theta_{YA} \, A + U_Y~, \label{eq:str.eq.X} \\
X &= \theta_{XA} \, A + \theta_{XY} \, Y + U_X~, \label{eq:str.eq.Y}
\end{align}
where all variables have mean 0 and variance 1. Assuming faithfulness, we have that all marginal and partial covariances are non-zero and given by,
\begin{align*}
Cov(X, Y) &= \theta_{XY} + \theta_{XA} \, \theta_{AY}~, \\
Cov(X, A) &= \theta_{XA} + \theta_{YA} \, \theta_{XY}~, \\
Cov(A, Y) &= \theta_{YA}~, \\
Cov(X, Y \mid A) &= Cov(X, Y) - Cov(X, A) \, Cov(A, Y)~, \\
Cov(X, A \mid Y) &= Cov(X, A) - Cov(X, Y) \, Cov(A, Y)~, \\
Cov(A, Y \mid X) &= Cov(A, Y) - Cov(X, A) \, Cov(X, Y)~.
\end{align*}

Now, consider the residualized input, $\hat{X}_r = X - \hat{\omega}_{XA} A$, and suppose that sample size is large, so that it converges to $X_r = X - \omega_{XA} A$, where $\omega_{XA} = \theta_{XA} + \theta_{YA} \theta_{XY}$. Hence, by construction, we have that $Cov(X_r, A) = 0$ since,
\begin{align*}
Cov(X_r, A) &= Cov(X - \omega_{XA} A, A) = Cov(X, A) - \omega_{XA} Cov(A, A) \\
&= Cov(X, A) - \omega_{XA} = \theta_{XA} + \theta_{YA} \, \theta_{XY} - \omega_{XA} = 0~.
\end{align*}
Hence, we see that,
\begin{align*}
Cov(X_r, Y) &= Cov(X - \omega_{XA} A, Y) = Cov(X, Y) - \omega_{XA} Cov(A, Y)~, \\
&= \theta_{XY} (1 - \theta_{YA}^2) \\
Cov(X_r, A) &= 0, \\
Cov(A, Y) &= \theta_{YA}~, \\
Cov(X_r, Y \mid A) &= Cov(X_r, Y)~, \\
Cov(X_r, A \mid Y) &= - Cov(X_r, Y) \, Cov(A, Y)~, \\
Cov(A, Y \mid X_r) &= Cov(A, Y)~.
\end{align*}

Quite importantly, observe that the above marginal and partial covariances show that the conditional (in)dependence pattern generated by the residualization approach is given by,
\begin{equation*}
X_r \nci Y~, \;\;\;
X_r \ci A~, \;\;\;
A \nci Y~, \;\;\;
X_r \nci Y \mid A~, \;\;\;
X_r \nci A \mid Y~, \;\;\;
A \nci Y \mid X_r~,
\end{equation*}
which is consistent with the causal model,
$$
\xymatrix@-1.0pc{
& *+[F-:<10pt>]{A} \ar[dr] &  \\
*+[F-:<10pt>]{X_r} \ar[rr] & & *+[F-:<10pt>]{Y} \\
}
$$
since $X_r$ and $A$ are marginally independent, but become conditionally associated when we condition on $Y$. Consequently, when we train a learner with the residualized features, the conditional independence relations among the $\hat{Y}_r^{ts}$, $A^{ts}$, and $Y^{ts}$ will be consistent with the model,
$$
\xymatrix@-1.5pc{
& *+[F-:<10pt>]{A^{tr}} \ar[dr] & & & & *+[F-:<10pt>]{A^{ts}} \ar[dr] & & &  \\
*+[F-:<10pt>]{X_r^{tr}} \ar@[red][drrr] \ar[rr] & & *+[F-:<10pt>]{Y^{tr}} \ar@[red][dr] & & *+[F-:<10pt>]{X_r^{ts}} \ar@[red][dl] \ar[rr] && *+[F-:<10pt>]{Y^{ts}} \\
&&& *+[F-:<10pt>]{\hat{Y}_r^{ts}} && \\
}
$$
so that we should expect to see the following conditional independence relations,
\begin{align*}
&{\hat{Y}_r^{ts}} \nci Y^{ts}~, \\
&{\hat{Y}_r^{ts}} \ci A^{ts}~, \\
&A^{ts} \nci Y^{ts}~, \\
&{\hat{Y}_r^{ts}} \nci Y^{ts} \mid A^{ts}~, \\
&{\hat{Y}_r^{ts}} \nci A^{ts} \mid Y^{ts}~, \\
&A^{ts} \nci Y^{ts} \mid \hat{Y}_r^{ts}~.
\end{align*}

Figure \ref{fig:ci.patterns.resid} reports the observed conditional independence patterns for the residualized features generated from the synthetic data experiments based on the correctly specified models, and illustrate this point.
\begin{figure}[!h]
\centerline{\includegraphics[width=3.5in]{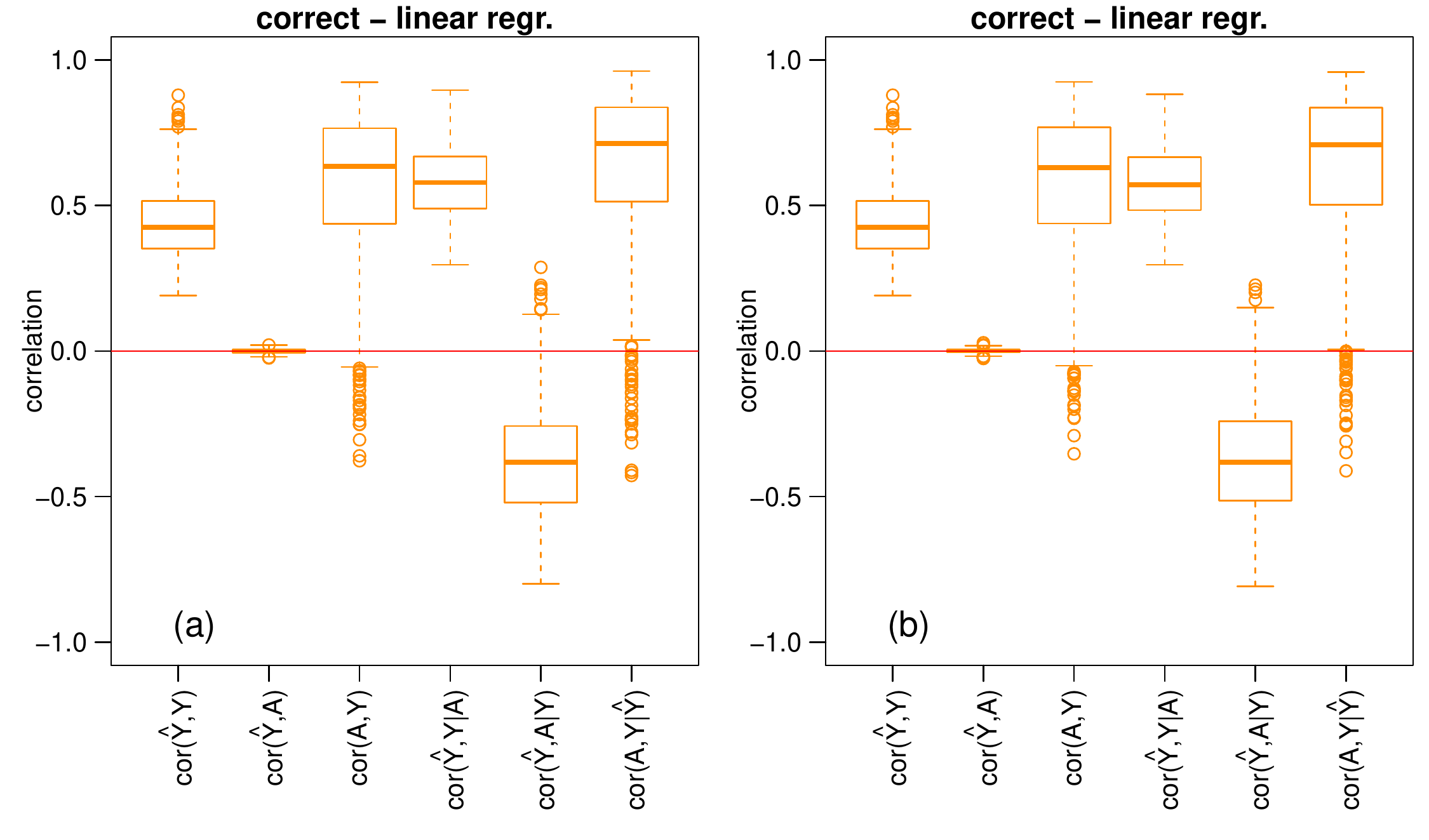}}
\caption{Conditional independence patterns for the residualization approach. Panels a and b show the results for the $A_1$ and $A_2$ confounders, respectively.}
\label{fig:ci.patterns.resid}
\end{figure}

These observations clearly show that it doesn't make sense to apply the conditional (in)dependenve pattern evaluation approach to data processed with the residualization approach, since the conditional (in)dependenve pattern is not faithful to the true anticausal model, where the output has a causal influence on the predictions. Note, however, that the fact that the residualization approach generates unfaithful data is not really surprising, given that residualization is not a causality-inspired approach.

\section{Stability comparisons}

\subsection{Expected MSE values for the toy model used in the stability experiments}

Here, we present the expected MSE values for the causality-aware and residualization approches, for the toy model used in the dataset shift experiments. These analyses show that, contrary to the residualization approach, the expected MSE for the causality-aware adjustment does not depend on the covariance between $A$ and $Y$ ($\sigma_{AY}$). These results explain the better stability of the causality-aware approach w.r.t. dataset shifts in the $P(A, Y)$ distribution.

In our illustrations we consider the model,
$$
\xymatrix@-1.2pc{
& *+[F-:<10pt>]{A} \ar[dl]_{\beta_{XA}} \ar@/^1.0pc/@{<->}[dr]^{\sigma_{AY}} &  \\
*+[F-:<10pt>]{X} && *+[F-:<10pt>]{Y} \ar[ll]^{\beta_{XY}} \\
}
$$
where we assume that $A$ and $Y$ are correlated random variables with expectation and covariance given by,
\begin{equation}
E
\begin{pmatrix}
A \\
Y \\
\end{pmatrix}
=
\begin{pmatrix}
0 \\
0 \\
\end{pmatrix}~, \hspace{0.5cm}
\bfSigma_{A,Y} =
\begin{pmatrix}
Var(A) & Cov(A, Y) \\
Cov(A, Y) & Var(Y) \\
\end{pmatrix}
=
\begin{pmatrix}
\sigma_{AA} & \sigma_{AY} \\
\sigma_{AY} & \sigma_{YY} \\
\end{pmatrix}~,
\end{equation}
and $X$ follows the regression model,
\begin{equation}
X = \beta_{XY} \, Y + \beta_{XA} \, A + U_X~, \hspace{0.3cm} E(U_X) = 0~, \hspace{0.3cm} Var(U_X) = \sigma_{X}^2~,
\end{equation}
Observe that $E(X)$ equals 0 for this model. We assume that $\beta_{XY}$ and $\beta_{XA}$ are stable, but $\bfSigma_{A,Y}$ is unstable between training and test sets (i.e., $\bfSigma_{A,Y}^{tr} \not= \bfSigma_{A,Y}^{ts}$).

Now, let $\hat{Y} = X^{ts} \hat{\beta}_{YX}^{tr}$ represent the prediction of linear model, where $X^{ts}$ represents the test set feature, and $\hat{\beta}_{YX}^{tr}$ represents the regression coefficients estimated from the training set. By definition, the expected mean squared error of the prediction is given by,
\begin{align}
E[MSE] &= E[(Y^{ts} - \hat{Y})^2] = E[(Y^{ts})^2] + E[\hat{Y}^2] - 2 E[\hat{Y} Y^{ts}]~, \\
&= E[(Y^{ts})^2] + (\hat{\beta}_{YX}^{tr})^2 E[(X^{ts})^2] - 2 \hat{\beta}_{YX}^{tr} E[X^{ts} Y^{ts}]~, \\
&= Var(Y^{ts}) + (\hat{\beta}_{YX}^{tr})^2 Var(X^{ts}) - 2 \hat{\beta}_{YX}^{tr} Cov(X^{ts}, Y^{ts}),
\end{align}
where the third equality follows from the fact that $E[Y^{ts}] = E[X^{ts}] = 0$. Observe, as well, that because expectation is taken w.r.t. the test set, we have that $\hat{\beta}_{YX}^{tr}$ is a constant.

Starting with the causality-aware approach, and assuming that sample size goes to infinity, so that $\hat{X}_c = X - \hat{\beta}_{XA} A$ converges to $X_c = X - \beta_{XA} A = \beta_{XY} Y + U_X$, we have that,
\begin{align}
&Var(X_{c}^{ts}) = Var(\beta_{XY} \, Y^{ts} + U_X^{ts}) = \beta_{XY}^2 Var(Y^{ts}) + Var(U_X^{ts}) = \sigma_{X}^{2} + \beta_{XY}^2 \, \sigma_{YY}^{ts} \\
&Cov(X_{c}^{ts}, Y^{ts}) = Cov(\beta_{XY} \, Y^{ts} + U_X^{ts}, Y^{ts}) = \beta_{XY} Var(Y^{ts}) = \beta_{XY} \, \sigma_{YY}^{ts}
\end{align}
so that,
\begin{equation}
E[MSE_c] = \sigma_{YY}^{ts} + (\hat{\beta}_{YX}^{c,tr})^2 (\sigma^2_X + \beta_{XY}^2 \, \sigma_{YY}^{ts}) - 2 \hat{\beta}_{YX}^{c,tr} \beta_{XY} \, \sigma_{YY}^{ts}~,
\end{equation}
is not a function of $\sigma_{AY}$ (although it still depends on $\sigma_{YY}^{ts}$).

For the residualization approach, on the other hand, we have that $\hat{X}_r = X - \hat{\omega}_{XA} A$ converges to $X_r = X - \omega_{XA} A$ so that the variance of $X_{r}^{ts}$ is given by,
\begin{align}
Var(X_{r}^{ts}) &= Var(X^{ts} - \omega_{XA} \, A^{ts}) \nonumber \\
&= Var(X^{ts}) + \omega_{XA}^2 \, Var(A_{ts}) - 2 \, \omega_{XA} \, Cov(X^{ts}, A^{ts}) \nonumber \\
&= Var(X^{ts}) + \frac{Cov(X^{ts}, A^{ts})^2}{Var(A^{ts})^2} \, Var(A^{ts}) - 2 \, \frac{Cov(X^{ts}, A^{ts})}{Var(A^{ts})} \, Cov(X^{ts}, A^{ts}) \nonumber \\
&= Var(X^{ts}) - \frac{Cov(X^{ts}, A^{ts})^2}{Var(A^{ts})} \nonumber \\
&= Var(X^{ts}) - \frac{Cov(\beta_{XY} \, Y^{ts} + \beta_{XA} \, A^{ts} + U_X^{ts}, A^{ts})^2}{Var(A^{ts})} \nonumber \\
&= Var(X^{ts}) - \frac{[\beta_{XY} \, Cov(Y^{ts}, A^{ts}) + \beta_{XA} \, Var(A^{ts})]^2}{Var(A^{ts})} \nonumber \\
&= Var(X^{ts}) - \frac{(\beta_{XY} \, \sigma_{AY}^{ts} + \beta_{XA} \, \sigma_{AA}^{ts})^2}{\sigma_{AA}^{ts}} \nonumber \\
&= \sigma_{X}^{2} + \beta_{XY}^2 \, \sigma_{YY}^{ts} - \frac{\beta_{XY}^2 \, (\sigma_{AY}^{ts})^2}{\sigma_{AA}^{ts}}
\end{align}
where the third equality follows from the fact that $\omega_{XA}$ represents the coefficient of the regression of $X$ on $A$, and corresponds to $Cov(X^{ts}, A^{ts})/Var(A^{ts})$, while the last equality follows from the fact that,
\begin{align}
Var(X^{ts}) &= Var(\beta_{XY} \, Y^{ts} + \beta_{XA} \, A^{ts} + U_X^{ts}) \nonumber \\
&= \beta_{XY}^2 \, Var(Y^{ts}) + \beta_{XA}^2 \, Var(A^{ts}) + Var(U_X^{ts}) + 2 \beta_{XY} \, \beta_{XA} \, Cov(Y^{ts}, A^{ts}) \nonumber \\
&= \sigma_{X}^2 + \beta_{XY}^2 \, \sigma_{YY}^{ts} + \beta_{XA}^2 \, \sigma_{AA}^{ts} + 2 \beta_{XY} \, \beta_{XA} \, \sigma_{AY}^{ts}~.
\end{align}
Computation of the covariance shows that,
\begin{align}
Cov(X_{r}^{ts}, Y^{ts}) &= Cov(X^{ts}, Y^{ts}) - \omega_{XA} \, Cov(A^{ts}, Y^{ts}) \nonumber \\
&= Cov(\beta_{XY} \, Y^{ts} + \beta_{XA} \, A^{ts} + U_X^{ts}, Y^{ts}) - \frac{Cov(X^{ts}, A^{ts})}{Var(A^{ts})} \, \sigma_{AY}^{ts} \nonumber \\
&= \beta_{XY} \, Var(Y^{ts}) + \beta_{XA} \, Cov(A^{ts}, Y^{ts}) - \frac{\beta_{XY} \, \sigma_{AY}^{ts} + \beta_{XA} \, \sigma_{AA}^{ts}}{\sigma_{AA}^{ts}} \, \sigma_{AY}^{ts} \nonumber \\
&= \beta_{XY} \, \sigma_{YY}^{ts} + \beta_{XA} \, \sigma_{AY}^{ts} - \frac{\beta_{XY} \, \sigma_{AY}^{ts} + \beta_{XA} \, \sigma_{AA}^{ts}}{\sigma_{AA}^{ts}} \, \sigma_{AY}^{ts} \nonumber \\
&= \beta_{XY} \, \sigma_{YY}^{ts} - \frac{\beta_{XY} \, (\sigma_{AY}^{ts})^2}{\sigma_{AA}^{ts}}
\end{align}
Therefore, we have that,
\begin{align*}
E[MSE_r] &= \sigma_{YY}^{ts} + (\hat{\beta}_{YX}^{r,tr})^2 \left(\sigma^2_X + \beta_{XY}^2 \, \sigma_{YY}^{ts} - \frac{\beta_{XY}^2 \, (\sigma_{AY}^{ts})^2}{\sigma_{AA}^{ts}}\right) \, - \\
&\;\;\;-2 \hat{\beta}_{YX}^{r,tr} \left( \beta_{XY} \, \sigma_{YY}^{ts} - \frac{\beta_{XY} \, (\sigma_{AY}^{ts})^2}{\sigma_{AA}^{ts}}\right)~,
\end{align*}
is a function of $\sigma_{AY}$ (and of $\sigma_{AA}$ and $\sigma_{YY}$, as well).

These results show when $\beta_{XY}$ and $\beta_{XA}$ are stable but $\bfSigma_{A,Y}$ is not, we have that $E[MSE_c]$ is still inherently more stable than $E[MSE_r]$, since the latter will vary with $\sigma_{AY}$, $\sigma_{AA}$ and $\sigma_{YY}$, while $E[MSE_c]$ is stable w.r.t. shifts on $\sigma_{AY}$ on $\sigma_{AA}$.

\subsection{Expected MSE values in the general case}

As described in~\cite{achaibubneto2020a} the expected MSE value for an arbitrary anticausal prediction tasks based on linear models is given by,
\begin{align*}
E[MSE] \, &=  \, Var(Y^{ts}) \, + \sum_{j=1}^{p} (\hat{\beta}_j^{tr})^2 Var(X_{j}^{ts}) + 2 \sum_{j < k} \hat{\beta}_j^{tr} \hat{\beta}_k^{tr} Cov(X_{j}^{ts}, X_{k}^{ts}) \, - \\
 &- 2 \sum_{j=1}^{p} \hat{\beta}_j^{tr} Cov(X_{j}^{ts}, Y^{ts})~.
\end{align*}
where, for the causality-aware approach, we have that the quatities,
\begin{align*}
Var(X_{c,j}^{ts}) &= Var(\beta_{{X_j}Y} \, Y^{ts} + U_{X_j}^{ts}) = \sigma^2_{X_j} + \beta_{{X_j}Y}^2 \, Var(Y^{ts})~,
\end{align*}
\begin{align*}
Cov(X_{c,j}^{ts}, X_{c,k}^{ts}) &= Cov(\beta_{{X_j}Y} \, Y^{ts} + U_{X_j}^{ts}, \beta_{{X_k}Y} \, Y^{ts} + U_{X_k}^{ts}) \\
&= \beta_{{X_j}Y} \, \beta_{{X_k}Y} \, Var(Y^{ts}) + Cov(U_{X_j}^{ts}, U_{X_k}^{ts})~,
\end{align*}
\begin{align*}
Cov(X_{c,j}^{ts}, Y^{ts}) &= Cov(\beta_{{X_j}Y} \, Y^{ts} + U_{X_j}^{ts}, Y^{ts}) = \beta_{{X_j}Y} \, Var(Y^{ts})~,
\end{align*}
do not depend on $Cov(A^{ts}, Y^{ts})$.

Now, because $\bfmX_r$ can be re-expressed as,
\begin{align*}
\bfmX_r &= \bfmX - \bfOmega_{XA} \, \bfmA~, \\
&= \bfGamma_{XA} \, \bfmA  + \bfGamma_{XY} \, Y + \bfmW_X - (\bfGamma_{XA} + \bfGamma_{XY} \, \bfGamma_{YA}) \, \bfmA~, \\
&= \bfGamma_{XY} \, Y + \bfmW_X - \bfGamma_{XY} \, \bfGamma_{YA} \, \bfmA~, \\
&= \bfmX_c - \bfGamma_{XY} \, \bfGamma_{YA} \, \bfmA~,
\end{align*}
we have that,
\begin{align}
Cov(\bfmX_r) &= Cov(\bfmX_c - \bfGamma_{XY} \, \bfGamma_{YA} \, \bfmA \, , \, \bfmX_c - \bfGamma_{XY} \, \bfGamma_{YA} \, \bfmA) \nonumber \\
&= Cov(\bfmX_c) - Cov(\bfmX_c, \bfmA) \, \bfGamma_{YA}^T \, \bfGamma_{XY}^T \, - \nonumber \\
&\;\;\;\;-\bfGamma_{XY} \, \bfGamma_{YA} \, Cov(\bfmA, \bfmX_c) + \bfGamma_{XY} \, \bfGamma_{YA} \, Cov(\bfmA) \, \bfGamma_{YA}^T \, \bfGamma_{XY}^T~ \nonumber \\
&= Cov(\bfmX_c) - \bfGamma_{XY} \, Cov(Y, \bfmA) \, \bfGamma_{YA}^T \, \bfGamma_{XY}^T \, - \nonumber \\
&\;\;\;\;-\bfGamma_{XY} \, \bfGamma_{YA} \, Cov(\bfmA, Y) \, \bfGamma_{XY}^T + \bfGamma_{XY} \, \bfGamma_{YA} \, Cov(\bfmA) \, \bfGamma_{YA}^T \, \bfGamma_{XY}^T~ \nonumber \\
&= Cov(\bfmX_c) - 2 \, \bfGamma_{XY} \, Cov(Y, \bfmA) \, \bfGamma_{YA}^T \, \bfGamma_{XY}^T \, + \bfGamma_{XY} \, \bfGamma_{YA} \, Cov(\bfmA) \, \bfGamma_{YA}^T \, \bfGamma_{XY}^T~, \label{eq:cov.Xr.as.function.of.Xc}
\end{align}
where the third equality follows from the fact that,
\begin{equation*}
Cov(\bfmX_c, \bfmA) = Cov(\bfGamma_{XY} \, Y + \bfmW_X, \bfmA) = \bfGamma_{XY} \, Cov(Y, \bfmA)~,
\end{equation*}
and the fourth equality from the fact that $Cov(Y, \bfmA) \, \bfGamma_{YA}^T = \bfGamma_{YA} \, Cov(\bfmA, Y)$ is a scalar.

From equation (\ref{eq:cov.Xr.as.function.of.Xc}) it is clear that both $Var(X_{r,j}^{ts})$ (which corresponds to the $j$-th diagonal element of $Cov(\bfmX_r)$) and $Cov(X_{r,j}^{ts}, X_{r,k}^{ts})$ (which corresponds to the $j,k$-th (off-diagonal) element of $Cov(\bfmX_r)$) are still functions of $Cov(Y, \bfmA)$. Similarly, note that $Cov(X_{r,j}^{ts}, Y^{ts})$ is also still a function of $Cov(\bfmA, Y)$ since,
\begin{align*}
Cov(\bfmX_r, Y) &= Cov(\bfmX_c - \bfGamma_{XY} \, \bfGamma_{YA} \, \bfmA \, , \, Y)~, \\
&= Cov(\bfmX_c, Y) - \bfGamma_{XY} \, \bfGamma_{YA} \, Cov(\bfmA \, , \, Y)~, \\
&= Cov(\bfGamma_{XY} \, Y + \bfmW_X, Y) - \bfGamma_{XY} \, \bfGamma_{YA} \, Cov(\bfmA \, , \, Y)~, \\
&= \bfGamma_{XY} \, Var(Y) - \bfGamma_{XY} \, \bfGamma_{YA} \, Cov(\bfmA \, , \, Y)~.
\end{align*}

Therefore, we have that for the residualization approach, the expected mean squared error is still a function of $Cov(Y^{ts}, \bfmA^{ts})$ and will, therefore, be unstable w.r.t. shifts in this quantity.

\subsection{Stability experiments}

We performed two stability experiments. In the first we kept $Var(Y^{ts})$ constant across the test sets, while in the second we let $Var(Y^{ts})$ vary across the test sets.

Each experiment was based in 1,000 replications. In our first simulation experiment, for each replication we:
\begin{enumerate}
\item Sampled the causal effects $\beta_{XY}$ and $\beta_{XA}$ from a $U(-3, 3)$ distribution, and the training set covariance $\sigma_{AY}^{tr}$ from a $U(-0.8, 0.8)$ distribution.
\item Generated training data ($n = 10,000$) by first sampling,
\begin{equation}
\begin{pmatrix}
A^{tr} \\
Y^{tr} \\
\end{pmatrix}\,
\sim \mbox{N}_2\left(
\begin{pmatrix}
0 \\
0 \\
\end{pmatrix}\, , \,
\begin{pmatrix}
1 & \sigma_{AY}^{tr} \\
\sigma_{AY}^{tr} & 1 \\
\end{pmatrix} \right)~,
\end{equation}
and then generating $X^{tr} = \beta_{XA} \, A^{tr} + \beta_{XY} \, Y^{tr} + U_X^{tr}$ with $U_X^{tr} \sim N(0, 1)$.
\item Generated 9 distinct test sets, where each test set dataset ($n=10,000$) was generated by first sampling,
\begin{equation}
\begin{pmatrix}
A^{ts} \\
Y^{ts} \\
\end{pmatrix}\,
\sim \mbox{N}_2\left(
\begin{pmatrix}
0 \\
0 \\
\end{pmatrix}\, , \,
\begin{pmatrix}
\sigma_{AA}^{ts} & \sigma_{AY}^{ts} \\
\sigma_{AY}^{ts} & \sigma_{YY}^{ts} \\
\end{pmatrix} \right)~,
\end{equation}
and then generating $X^{ts} = \beta_{XA} \, A^{ts} + \beta_{XY} \, Y^{ts} + U_X^{ts}$ with $U_X^{ts} \sim N(0, 1)$. In order to generate dataset shifts, the covariances between $A^{ts}$ and $Y^{ts}$ and the variances of $A^{ts}$ were set, respectively, to $\sigma_{AY}^{ts} = \{-0.8$, $-0.6$, $-0.2$, 0, 0.2, 0.4, 0.6, $0.8 \}$ and $\sigma_{AA}^{ts} = \{1.00$, 1.25, 1.50, 1.75, 2.00, 2.25, 2.50, 2.75, $3.00\}$ across the 9 distinct test sets, while the variance of $Y_{ts}$ was fixed at $\sigma_{YY} = 1$.
\item Processed the training and the test features as described in Sections 2.4 and 2.5 in the main text to generate the residualized and causality-aware features.
\item Trained linear regression models using the residualized and causality-aware features and evaluated the performance of each of the trained models on each of the 9 test sets.
\end{enumerate}

Our second experiment, was run as described above, except that we let $\sigma_{YY}^{ts}$ vary according to $\{1.00, 1.25, 1.50, 1.75, 2.00, 2.25, 2.50, 2.75, 3.00\}$ across the 9 test sets. Figure \ref{fig:dataset.shift.2} reports the results and shows that, while $MSE_c$ also changed across the test sets, the causality-aware approach was still much more stable than residualization.

\begin{figure}[!h]
\centerline{\includegraphics[width=5in]{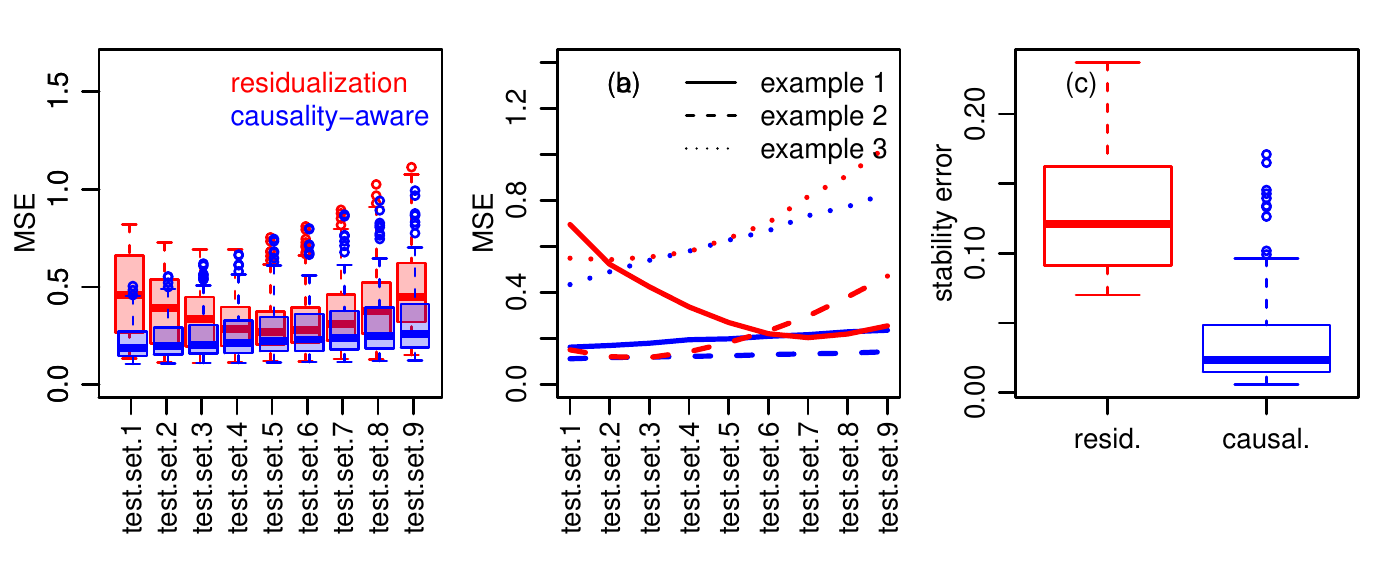}}
\caption{Stability illustrations, with increasing $Var(Y^{ts})$.}
\label{fig:dataset.shift.2}
\end{figure}


\section{A note on anticausal neuroimage disgnostic applications}

This paper focus on anticausal prediction tasks. Diagnostic applications based on neuroimaging data, represent a clear example of an anticausal prediction task. In these applications, the outcome variable represents the disease status (e.g., healthy cognition versus mild cognitive impairment versus full blown Alzheimer's disease) while the inputs represent features extracted from the neuroimages. Because individuals at different stages of the disease trajectory show structural brain differences, we have that features extracted from neuroimages can be used to predict the disease status. Observe, however, that the prediction goes in the anticausal direction, since it is the structural brain differences shown by individuals at different stages of the disease that cause the observed patterns and intensity of pixels in the images, and not the other way around. In this sense, the observed patterns and intensity of pixels observed in neuroimages represent ``symptoms" of the neurologic disease in the same way as the tremor patterns captured by accelerometers represent symptoms of Parkinson's disease. Hence, the causal graph underlying diagnostic neuroimage applications (potentially influenced by selection biases) is given by the DAG,
\begin{figure}[!h]
$$
\xymatrix@-1.0pc{
& *+[F-:<10pt>]{A} \ar[dl] \ar[dr] \ar[r] & *+[F]{S} \\
*+[F-:<10pt>]{P_{ix}} && *+[F-:<10pt>]{Y} \ar[ll] \ar[u] \\
}
$$
\vskip -0.1in
  \caption{Causal diagram for neuroimage diagnostic applications.}
  \label{fig:examples.neuroimage}
\end{figure}
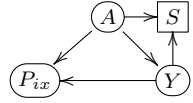
where $P_{ix}$ represents the images (or features extracted from the images), $Y$ represents the disease states, and $A$ represents a confounder such as age. (Here, $S$ represents a binary variable indicating the presence of a selection bias mechanism).

Note that other types of neuroimage applications, such as the mapping of different stimuli measured by fMRI, require more complicated modeling based on cyclic models (which we do not address in this paper).

\end{document}